\newtheorem{thm}{Theorem}[section]
\newtheorem{theorem}[thm]{Theorem}
\newtheorem{lemma}[thm]{Lemma}
\newtheorem{corollary}[thm]{Corollary}
\theoremstyle{definition}
\renewcommand{\vec}[1]{\bm {#1}}
\let\originalleft\left
\let\originalright\right
\renewcommand{\left}{\mathopen{}\mathclose\bgroup\originalleft}
\renewcommand{\right}{\aftergroup\egroup\originalright}
\newcommand{\cmark}{\ding{51}}%
\newcommand{\xmark}{\ding{55}}
 \newcommand{\deleted}[1]{{}}
 \newcommand{\deleted}[1]{{\sout{\color{red}#1}}} 
\newcommand{\atoppush}{TopPush\xspace}
\newcommand{\atoppushk}{TopPushK\xspace}
\newcommand{\agrill}{Grill\xspace}
\newcommand{\apatmat}{Pat\&Mat\xspace}
\newcommand{\atopmeank}{TopMean\xspace}
\newcommand{\anpA}{Grill-NP\xspace}
\newcommand{\anpB}{Pat\&Mat-NP\xspace}
\newcommand{\anpC}{TopMean-NP\xspace}
\newcommand{\toppush}{\emph{TopPush}\xspace}
\newcommand{\toppushk}{\emph{TopPushK}\xspace}
\newcommand{\grill}{\emph{Grill}\xspace}
\newcommand{\patmat}{\emph{Pat}\&\emph{Mat}\xspace}
\newcommand{\topmeank}{\emph{TopMean}\xspace}
\newcommand{\npA}{\emph{Grill-NP}\xspace}
\newcommand{\npB}{{\emph{Pat}\&\emph{Mat-NP}}\xspace}
\newcommand{\npC}{{\emph{TopMean-NP}}\xspace}
\newcommand{\norm}[1]{\|#1\|}
\newcommand{\tp}{\textnormal{tp}}
\newcommand{\tn}{\textnormal{tn}}
\newcommand{\fp}{\textnormal{fp}}
\newcommand{\fn}{\textnormal{fn}}
\newcommand{\tps}{\overline{\textnormal{tp}}}
\newcommand{\tns}{\overline{\textnormal{tn}}}
\newcommand{\fps}{\overline{\textnormal{fp}}}
\newcommand{\fns}{\overline{\textnormal{fn}}}
\newcommand{\Xcal}{\mathcal{X}}
\colorlet{tableheadcolor}{gray!25} % Table header colour = 25% gray
\newcommand{\headcol}{\rowcolor{tableheadcolor}}
\pgfplotsset{compat=1.3}
\pgfplotsset{
    line1/.style={%
        smooth, red, thick},
    line2/.style={%
        smooth, blue, thick, dashed},
    line3/.style={%
        black, no marks, dotted},
    line4/.style={%
        black, no marks, dashdotted},
    col1/.style={%
        xmin=0, xmax=1,
        xtick={0,0.2,0.4,0.6,0.8}},
    col2/.style={%
        xmin=0, xmax=0.1,
        xtick={0,0.02,0.04,0.06,0.08},
        xticklabels={0,0.02,0.04,0.06,0.08}},
    col3/.style={%
        xmin=0, xmax=0.01,
        xtick={0,0.002,0.004,0.006,0.008},
        xticklabels={0,0.002,0.004,0.006,0.008},
        scaled x ticks=false},
    legendStyleA/.style={%
        column sep = 10pt,
        legend columns = -1,
        legend to name = grouplegendA},
}
\newcommand{\theoremNN}[2]{\noindent\textbf{Theorem \ref{#1} (page \pageref{#1})}\ \emph{#2} \\}
\begin{document}

\title{General Framework for Binary Classification on Top Samples}

\author[1]{Luk\'a\v{s} Adam\thanks{3@ieee.org}}
\author[2]{V\'aclav M\'acha}
\author[2]{V\'aclav \v{S}m\'idl}
\author[3]{Tom\'a\v{s} Pevn\'y}

\affil[1]{Department of Computer Science and Engineering, Southern University of Science and Technology, Shenzhen, China}
\affil[2]{Institute of Information Theory and Automation, Czech Academy of Sciences, Prague, Czech Republic}
\affil[3]{Department of Computer Science, Czech Technical University in Prague, Prague, Czech Republic}
\renewcommand\Authands{ and }

\maketitle

\begin{abstract}%   <- trailing '%' for backward compatibility of .sty file
Many binary classification problems minimize misclassification above (or below) a threshold. We show that instances of ranking problems, accuracy at the top or hypothesis testing may be written in this form. We propose a general framework to handle these classes of problems and show which known methods (both known and newly proposed) fall into this framework. We provide a theoretical analysis of this framework and mention selected possible pitfalls the methods may encounter. We suggest several numerical improvements including the implicit derivative and stochastic gradient descent. We provide an extensive numerical study. Based both on the theoretical properties and numerical experiments, we conclude the paper by suggesting which method should be used in which situation.
\end{abstract}

\smallskip
\noindent \textbf{Keywords:} General framework, Classification, Ranking, Accuracy at the Top, Neyman-Pearson

\noindent \textbf{AMS classification:}
90C15, %Stochastic programming*
90C26, %Nonconvex programming, global optimization*
49M05. %Methods based on necessary conditions*

\section{Introduction}\label{sec:Motivation}

Many binary classification problems focus on separating the dataset by a linear hyperplane $\bm w^\top \bm x-t$. A sample $\bm x$ is deemed to be positive or relevant (depending on the application) if its score $\bm w^\top \bm x$ is above a threshold $t$. Multiple problem categories belong to this framework:
\begin{itemize}\itemsep 0pt
%\item \textit{Support vector machines} where the threshold equals the offset of the separating hyperplane. The accuracy is evaluated for the whole dataset.
\item \textit{Ranking problems} select the most relevant samples and rank them. To each sample, a numerical score is assigned and the ranking is performed based on this score. Often, only scores above a threshold are considered.
\item \textit{Accuracy at the Top} is similar to ranking problems. However, instead of ranking the most relevant samples, it only maximizes the accuracy (equivalently minimizes the misclassification) in these top samples. The prime examples of both categories include search engines or problems where identified samples undergo expensive post-processing such as human evaluation.
\item \textit{Hypothesis testing} states a null and an alternative hypothesis. The Neyman-Pearson problem minimizes the Type II error (the null hypothesis is false but it fails to be rejected) while keeping the Type I error (the null hypothesis is true but is rejected) small. If the null hypothesis states that a sample has the positive label, then Type II error happens when a positive sample is below the threshold and thus minimizing the Type II error amounts to minimizing the positives below the threshold.
\end{itemize}
All these three applications may be written (possibly after a reformulation) in a similar form as a minimization of the false-negatives (misclassified positives) above a threshold. They only differ in the way they define the threshold. Despite this striking similarity, they are usually considered separately in the literature. The main goal of this paper is to provide a unified framework for these three applications and perform its theoretical and numerical analysis.

The goal of the ranking problems is to rank the relevant samples higher than the non-relevant ones. A prototypical example is the RankBoost \cite{freund2003efficient} maximizing the area under the ROC curve, the Infinite Push \cite{agarwal2011infinite} or the $p$-norm push \cite{rudin.2009} which concentrate on the high-ranked negatives and push them down. Since all these papers include pairwise comparisons of all samples, they can be used only for small datasets. This was alleviated in \cite{Li_TopPush}, where the authors performed the limit $p\to\infty$ in $p$-norm push and obtained the linear complexity in the number of samples. Moreover, since the $l_{\infty}$-norm is equal to the maximum, this method falls into our framework with the threshold equal to the largest score computed from negative samples.

Accuracy at the Top ($\tau$-quantile) was formally defined in \cite{boyd2012accuracy} and maximizes the number of relevant samples in the top $\tau$-fraction of ranked samples. When the threshold equals the top $\tau$-quantile of all scores, this problem falls into our framework. The early approaches aim at solving approximations, for example, \cite{Joachims:2005:SVM:1102351.1102399} optimizes a convex upper bound on the number of errors among the top samples. Due to the presence of exponentially many constraints, the method is computationally expensive. \cite{boyd2012accuracy} presented an SVM-like formulation which fixes the index of the quantile and solves $n$ problems. While this removes the necessity to handle the (difficult) quantile constraint, the algorithm is computationally infeasible for a large number of samples. \cite{kar2015surrogate} derived upper approximations, their error bounds and solved these approximations. \cite{Grill_2016} proposed the projected gradient descent method where after each gradient step, the quantile is recomputed. \cite{Eban_2017} suggested new methods for various criteria and argued that they keep desired properties such as convexity. \cite{mackey2018constrained} generalized this paper by considering a more general setting. \cite{tasche2018plug} showed that accuracy at the top is maximized by thresholding the posterior probability of the relevant class.

Closest approach to our framework is \cite{lapin.2015,lapin2018analysis}, where the authors considered multi-class classification problems and their goal was to optimize the performance on the top few classes. Thus, the top was considered for classes and not for samples as in our case.

The paper is organized as follows. In Section \ref{sec:framework} we introduce the unified framework and then show how the three applications above fall into it. To each application, we state several numerical methods, some of them are known, some are modifications of known methods and some are new.

Section \ref{sec:example} presents a simple example which shows the differences between these methods and highlights the major problem that some of the methods may have the global minimum at $\bm w=0$. Since the weights form the normal of the separating hyperplane, this solution does not provide any information.

In Section \ref{sec:theory} we perform a theoretical analysis of the unified framework. First, we focus on convexity which ensures that no local minima are present. Second, we build on the example from the previous section and analyze the case when zero weights are the global minimum. We show that the higher the threshold, the more vulnerable a convex method is to have the global minimum at zero. Based on this result, we derive how the methods are chained with respect to increasing thresholds.

Section \ref{sec:num1} presents numerical considerations. We briefly mention the use of the stochastic gradient descent for our problems, describe the performance criteria and hyperparameter choice. Moreover, since the threshold $t$ depends on the weights $\bm w$, we follow the implicit programming technique and remove the threshold constraint. At the same time, we show how to compute derivatives for the reduced problem. Finally, we present the datasets on which we perform the numerical experiments in Section \ref{sec:num2}. Here, we present standard precision-recall curves and then concentrate on a comparison on the presented methods.

The paper is concluded by a recommendation stating which methods should be used in which situations. This recommendation is based both on the theoretical and numerical analysis. To keep the brevity of the paper, we postpone multiple results to the Appendix. The reader is welcome to refer to our codes online.\footnote{\texttt{https://github.com/VaclavMacha/ClassificationOnTop}}

\section{Framework for Minimizing Missclassification Above a Threshold}\label{sec:framework}

Many important binary classification problems minimize the number of misclassified samples below (or above) certain threshold. Since these problems are usually considered separately, in this section, we provide a unified framework for their handling and present several classification problems falling into this framework.

For samples $\bm x$, we consider the linear classifier $f(\bm w)=\bm w^\top \bm x-t$, where $\bm w$ is the normal vector to the separating hyperplane and $t$ is a threshold. The most well-known example is the support vector machines where $t$ is a free variable. In many cases the threshold $t$ is computed from the scores $z=\bm w^\top \bm x$. For example, the \toppush method from \cite{Li_TopPush} sets the threshold $t$ to the largest score $z^-$ corresponding to negative samples and in the method from \cite{Grill_2016} the threshold equals the quantile of all scores.

To be able to determine the missclassification above and below the threshold $t$, we define the true-positive, false-negative, true-negative and false-positive counts by
\begin{equation}\label{eq:defin_counts}
\aligned
\tp(\bm w,t) & = \sum_{\bm x\in\mathcal X^+}\left[\bm w^\top \bm x-t \ge 0\right], &
\fn(\bm w,t) & = \sum_{\bm x\in\mathcal X^+}\left[\bm w^\top \bm x-t < 0\right], \\
\tn(\bm w,t) & = \sum_{\bm x\in\mathcal X^-}\left[\bm w^\top \bm x-t < 0\right], &
\fp(\bm w,t) & = \sum_{\bm x\in\mathcal X^-}\left[\bm w^\top \bm x-t \ge 0\right].
\endaligned
\end{equation}
Here $[\cdot]$ is the 0-1 loss (Iverson bracket, characteristic function) which is equal to $1$ if the argument is true and to $0$ otherwise. Moreover, $\mathcal{X}/\mathcal{X}^{+}/\mathcal{X}^{-}$ denotes the sets of all/positive/negative samples and by $n/n^{+}/n^{-}$ their respective sizes. 

Since the misclassified samples below the threshold are the false-negatives, we arrive at the following problem
\begin{equation}\label{eq:problem1}
\aligned
{\rm minimize} & \quad\frac{1}{n^{+}}\fn(\bm w,t)\\
\textnormal{subject to} & \quad \text{threshold }t\text{ is a function of }\{\bm w^\top \bm x\}.
\endaligned
\end{equation}
As the 0-1 loss in \eqref{eq:defin_counts} is discontinuous, problem \eqref{eq:problem1} is difficult to handle. The usual approach is to employ a surrogate function such as the hinge loss function defined by
\begin{equation}\label{eq:defin_surrogate}
\aligned
l_{\rm hinge}\left(z\right) & =\max\left\{ 0,1+z\right\}. \\
%l_{\log}\left(z\right) & =\frac{1}{\log2}\log\left(1+e^z\right).
\endaligned
\end{equation}
In the text below, the symbol $l$ denotes any convex non-negative non-decreasing function with $l(0)=1$. Using the surrogate function, the counts \eqref{eq:defin_counts} may be approximated by their surrogate counterparts
\begin{equation}\label{eq:defin_counts_surr}
\aligned
\tps(\bm w ,t) & = \sum_{\bm x\in\mathcal{X}^+}l(\bm w^\top \bm x-t), &
\fns(\bm w ,t) & = \sum_{\bm x\in\mathcal{X}^+}l(t-\bm w^\top \bm x), \\
\tns(\bm w ,t) & = \sum_{\bm x\in\mathcal{X}^-}l(t-\bm w^\top \bm x),&
\fps(\bm w ,t) & = \sum_{\bm x\in\mathcal{X}^-}l(\bm w^\top \bm x-t).
\endaligned
\end{equation}
Since $l(\cdot)\ge[\cdot]$, the surrogate counts \eqref{eq:defin_counts_surr} provide upper approximations of the true counts \eqref{eq:defin_counts}. Replacing the counts in \eqref{eq:problem1} by their surrogate counterparts and adding a regularization results in
\begin{equation}\label{eq:problem2}
\aligned
{\rm minimize} & \quad\frac{1}{n^{+}}\fns(\bm w,t) + \frac\lambda2\norm{\bm w}^2 \\
\textnormal{subject to} & \quad \text{threshold }t\text{ is a function of }\{\bm w^\top \bm x\}.
\endaligned
\end{equation}

In the rest of this section, we list methods which fall into the framework of \eqref{eq:problem1} and \eqref{eq:problem2}. We divide the methods into three categories based on the criterion the methods optimize. To all categories, we provide several methods which include known methods, new methods and modifications of known methods.

\subsection{Methods based on pushing positives to the top}\label{sec:obj1}

The first category of methods falling into our framework \eqref{eq:problem1} and \eqref{eq:problem2} are ranking methods which attempt to put as many positives (relevant samples) to the top as possible. Specifically, for each sample $\bm x$, they compute the score $z=\bm w^\top \bm x$ and then sort the vector $\bm z$ into $\bm z_{[\cdot]}$ with decreasing components $z_{[1]}\ge z_{[2]}\ge\dots\ge z_{[n]}$. Since the number of positives on top amounts to the number of positives above the highest negative, this may be written as
\begin{equation}\label{eq:problem_top}
\aligned
{\rm maximize} & \quad\frac{1}{n^{+}}\tp(\bm w,t) \\
\textnormal{subject to} & \quad t = z_{[1]}^-, \\
& \quad \text{components of }\bm z^-\text{ equal to }z^-=\bm w^\top \bm x^-\text{ for }\bm x^- \in \Xcal^-.
\endaligned
\end{equation}
Note that since $\tp(\bm w,t)+\fn(\bm w,t)=n^+$, maximizing the true-positives is equivalent to minimizing the false-negatives. Thus, we observe that \eqref{eq:problem_top} is equivalent to
\begin{equation}\label{eq:problem_top2}
\aligned
{\rm minimize} & \quad\frac{1}{n^{+}}\fn(\bm w,t) \\
\textnormal{subject to} & \quad t = z_{[1]}^-, \\
& \quad \text{components of }\bm z^-\text{ equal to }z^-=\bm w^\top \bm x^-\text{ for }\bm x^- \in \Xcal^-.
\endaligned
\end{equation}
As $t$ is a function of the scores $z=\bm w^\top \bm x$, problem \eqref{eq:problem_top} is a special case of \eqref{eq:problem1}.

\subsubsection{TopPush}

The \toppush method \cite{Li_TopPush} replaces the false-negatives in \eqref{eq:problem_top2} by their surrogate and adds regularization to arrive at
\begin{equation}\label{eq:problem_toppush}
\aligned
{\rm minimize} & \quad\frac{1}{n^{+}}\fns(\bm w,t) + \frac\lambda2\norm{\bm w}^2 \\
\textnormal{subject to} & \quad t = z_{[1]}^-, \\
& \quad \text{components of }\bm z^-\text{ equal to }z^-=\bm w^\top \bm x^-\text{ for }\bm x^- \in \Xcal^-.
\endaligned
\end{equation}
Note that this falls into the framework of \eqref{eq:problem2}.

\subsubsection{TopPushK}

As we will show in Section \ref{sec:example}, the \toppush method is sensitive to outliers and mislabelled data. To robustify it, we follow the idea from \cite{lapin.2015} and propose to replace the largest negative score by the mean of $k$ largest negative scores. This results in
\begin{equation}\label{eq:problem_toppushk}
\aligned
{\rm minimize} & \quad\frac{1}{n^{+}}\fns(\bm w,t) + \frac\lambda2\norm{\bm w}^2 \\
\textnormal{subject to} & \quad t = \frac 1k(z_{[1]}^- + \dots + z_{[k]}^-), \\
& \quad \text{components of }\bm z^-\text{ equal to }z^-=\bm w^\top \bm x^-\text{ for }\bm x^- \in \Xcal^-.
\endaligned
\end{equation}
We used the mean of highest $k$ negative scores instead of the value of the $k$-th negative score to preserve convexity as indicated in Section \ref{sec:convexity}.

\subsection{Accuracy at the Top}\label{sec:obj2}

The previous category considers methods which minimize the false-negatives below the highest-ranked negative. Accuracy at the Top \cite{boyd2012accuracy} takes a different approach and minimizes false-positives above the top $\tau$-quantile defined by
\begin{equation}\label{eq:defin_quantile} 
t_{\rm Q}(\bm w) = \max\left\{t\mid \tp(\bm w,t) + \fp(\bm w,t)\ge n\tau\right\}.
\end{equation}
Then the Accuracy at the Top problem is defined by
\begin{equation}\label{eq:problem_aatp_orig}
\aligned
{\rm minimize} & \quad \frac{1}{n^{-}}\fp(\bm w,t) \\
\textnormal{subject to} & \quad t\text{ is the top \ensuremath{\tau}-quantile: it solves }\eqref{eq:defin_quantile}.
\endaligned
\end{equation}
Due to Lemma \ref{lemma:fnfp_equivalence} in the Appendix, we may equivalently (up to a small theoretical issue) replace the \eqref{eq:problem_aatp_orig} either by 
\begin{equation}\label{eq:problem_aatp_grill}
\aligned
{\rm minimize} & \quad \frac{1}{n^{+}}\fn(\bm w,t) + \frac{1}{n^{-}}\fp(\bm w,t)\\
\textnormal{subject to} & \quad t\text{ is the top \ensuremath{\tau}-quantile: it solves }\eqref{eq:defin_quantile}.
\endaligned
\end{equation}
or equivalently by
\begin{equation}\label{eq:problem_aatp}
\aligned
{\rm minimize} & \quad \frac{1}{n^{+}}\fn(\bm w,t)\\
\textnormal{subject to} & \quad t\text{ is the top \ensuremath{\tau}-quantile: it solves }\eqref{eq:defin_quantile}.
\endaligned
\end{equation}
While the former one falls fits into our framework \eqref{eq:problem1}, the latter one corresponds to the original definition from \cite{boyd2012accuracy} and makes a base for the \grill method described in the next paragraph.

\subsubsection{Grill}

The method from \cite{Grill_2016} builds on the Accuracy at the Top problem~\eqref{eq:problem_aatp_grill} where it replaces $\fn(\bm w, t)$ and $\fp(\bm w, t)$ in the objective by their surrogate counterparts $\fns(\bm w, t)$ and $\fps(\bm w, t)$. This leads to
\begin{equation}\label{eq:problem_grill}
\aligned
{\rm minimize} & \quad \frac{1}{n^{+}}\fns(\bm w,t) + \frac{1}{n^{-}}\fps(\bm w,t) + \frac\lambda2\norm{\bm w}^2\\
\textnormal{subject to} & \quad t\text{ is the top \ensuremath{\tau}-quantile: it solves }\eqref{eq:defin_quantile}.
\endaligned
\end{equation}
Based on the first author, we name this method \grill.

\subsubsection{\apatmat}

It is known that the quantile \eqref{eq:defin_quantile} is a non-convex function, which makes the optimization hard. Since \eqref{eq:defin_quantile} finds the threshold $t$ such that the fraction of samples above this threshold amounts $\tau$. This amounts to
\begin{equation}\label{eq:defin_quantile0}
\frac1n\sum_{\bm x\in \Xcal}\left[\beta(\bm w^\top \bm x-t)\right] = \tau,
\end{equation}
where $\beta$ is any positive scalar. This gives the idea to replace the counting function $[\cdot]$ by the surrogate function $l(\cdot)$ to arrive at the surrogate top $\tau$-quantile
\begin{equation}\label{eq:defin_quantile_surr} 
\bar{t}_{\rm Q}(\bm w)\quad \text{ solves }\quad \frac1n\sum_{\bm x\in \Xcal}l(\beta(\bm w^\top \bm x-t)) = \tau.
\end{equation}
As we will see later, the surrogate quantile \eqref{eq:defin_quantile_surr} is a convex approximation of the non-convex quantile \eqref{eq:defin_quantile}.

Then, we provide an alternative to \grill by replacing the true quantile by its surrogate counterpart and define propose problem
\begin{equation}\label{eq:problem_patmat}
\aligned
{\rm minimize} & \quad \frac{1}{n^{+}}\fns(\bm w,t) + \frac\lambda2\norm{\bm w}^2\\
\textnormal{subject to} & \quad t\text{ is the surrogate top \ensuremath{\tau}-quantile: it solves }\eqref{eq:defin_quantile_surr}.
\endaligned
\end{equation}
Note that \grill minimizes the convex combination of false-positives and false-negatives while \eqref{eq:problem_patmat} minimizes the false-negatives. The reason for this will be evident in Section \ref{sec:convexity} and amounts to preservation of convexity. Moreover, as will see later, problem \eqref{eq:problem_patmat} provides a good approximation to the Accuracy at the Top problem, it is easily solvable due to convexity and requires almost no tuning, we named it \patmat{} (Precision At the Top \& Mostly Automated Tuning).

\subsubsection{TopMean}

The main purpose of the surrogate quantile \eqref{eq:defin_quantile_surr} is to provide a convex approximation of the non-convex quantile \eqref{eq:defin_quantile}. We propose another convex approximation based again on \cite{lapin.2015}. If we take the $n\tau$ largest scores $z$ and 
compute their mean, we arrive at
\begin{equation}\label{eq:problem_topmeank}
\aligned
{\rm minimize} & \quad\frac{1}{n^{+}}\fns(\bm w,t) + \frac\lambda2\norm{\bm w}^2 \\
\textnormal{subject to} & \quad t = \frac 1{n\tau}(z_{[1]} + \dots + z_{[n\tau]}), \\
& \quad \text{components of }\bm z\text{ equal to }z=\bm w^\top \bm x\text{ for }\bm x \in \Xcal.
\endaligned
\end{equation}
There is a close connection between \toppushk and \topmeank. The former provides stability to \toppush and thus, the threshold is computed from negative scores and $k$ is small. On the other hand, the latter uses the threshold as an approximation of the quantile \eqref{eq:defin_quantile} and thus, the threshold is computed from all scores and $k=n\tau$ is large.

\subsection{Methods optimizing the Neyman-Pearson criterion}\label{sec:obj3}

Another category falling into the framework of \eqref{eq:problem1} and \eqref{eq:problem2} is the Neyman-Pearson problem which is closely related to hypothesis testing, where null $H_0$ and alternative $H_1$ hypotheses are given. Type~I error occurs when $H_0$ is true but is rejected and type II error happens when $H_0$ is false but it fails to be rejected. The standard technique is to minimize Type II error while a bound for Type I error is given.

In the Neyman-Pearson problem, the null hypothesis $H_0$ states that a sample $\bm x$ has the negative label. Then Type I error corresponds to false-positives while Type II error to false-negatives. If the bound on Type I error equals $\tau$, we may write this as
\begin{equation}\label{eq:defin_quantile_np} 
t_{\rm NP}(\bm w) = \max\left\{t\mid \fp(\bm w,t)\ge n^-\tau\right\}.
\end{equation}
Then, we may write the Neyman-Pearson problem as
\begin{equation}\label{eq:problem_np}
\aligned
{\rm minimize} & \quad \frac{1}{n^{+}}\fn(\bm w,t) \\
\textnormal{subject to} & \quad t\text{ is Type I error at level \ensuremath{\tau}: it solves }\eqref{eq:defin_quantile_np}.
\endaligned
\end{equation}
Since \eqref{eq:problem_np} differs from \eqref{eq:problem_aatp} only by counting only the false-positives in \eqref{eq:defin_quantile_np} instead of counting all positives in \eqref{eq:defin_quantile}, we can derive its three approximations in exactly the same way as in Section \ref{sec:obj2}. For this reason, we provide only their brief description.

\subsubsection{\npA}

Replacing the true counts by their surrogates results in the Neyman-Pearson variant of the \grill method
\begin{equation}\label{eq:problem_grill_np}
\aligned
{\rm minimize} & \quad \frac{1}{n^{+}}\fns(\bm w,t) + \frac{1}{n^{-}}\fps(\bm w,t) + \frac\lambda2\norm{\bm w}^2\\
\textnormal{subject to} & \quad t\text{ is the Neyman-Pearson threshold: it solves }\eqref{eq:defin_quantile_np}.
\endaligned
\end{equation}

\subsubsection{\npB}

Similarly as the surrogate quantile \eqref{eq:defin_quantile_surr} is a convex approximation of the non-convex quantile \eqref{eq:defin_quantile}, the surrogate Neyman-Pearson threshold
\begin{equation}\label{eq:defin_quantile_surr_np} 
\bar{t}_{\rm NP}(\bm w)\quad \text{ solves }\quad \frac1{n^-}\sum_{\bm x^-\in \Xcal^-}l(\beta(\bm w^\top \bm x^--t)) = \tau.
\end{equation}
is a convex approximation of the non-convex Neyman-Pearson threshold \eqref{eq:defin_quantile_np}. Then, similarly to \patmat, we write its Neyman-Pearson variant
\begin{equation}\label{eq:problem_patmat_np}
\aligned
{\rm minimize} & \quad \frac{1}{n^{+}}\fns(\bm w,t) + \frac\lambda2\norm{\bm w}^2\\
\textnormal{subject to} & \quad t\text{ is the surrogate Neyman-Pearson threshold: it solves }\eqref{eq:defin_quantile_surr_np}.
\endaligned
\end{equation}

\subsubsection{\npC}

Finally, the Neyman-Pearson alternative to \topmeank reads
\begin{equation}\label{eq:problem_topmeank_np}
\aligned
{\rm minimize} & \quad\frac{1}{n^{+}}\fns(\bm w,t) + \frac\lambda2\norm{\bm w}^2 \\
\textnormal{subject to} & \quad t = \frac 1{n^-\tau}(z_{[1]}^- + \dots + z_{[n^-\tau]}^-), \\
& \quad \text{components of }\bm z^-\text{ equal to }z^-=\bm w^\top \bm x^-\text{ for }\bm x^- \in \Xcal.
\endaligned
\end{equation}
We may see this problem in two different viewpoints. First, \npC provides a convex approximation of \npA. Second, \npC has the same form as \toppushk. The only difference is that for \npC we have $k=n^-\tau$ while for \toppushk the value of $k$ is small. Thus, even though we started from two different problems, we arrived at two approximations which differ only in the value of one parameter. This shows a close relation of the ranking problem and the Neyman-Pearson problem and the need for a unified theory to handle these problems.

\section{Example of a Degenerate Behavior}\label{sec:example}

In the previous section, we presented multiple criteria and methods. In this section, we provide a simple example and show that the state of the art method \toppush degenerates for it. This example also provides a motivation for the extensive theoretical analysis in Section~\ref{sec:theory} which is summarized in Table \ref{tab:methods} on page \pageref{tab:methods}, 

To define this example, consider the case of $n$ negative samples uniformly distributed in $[-1,0]\times[-1,1]$, $n$ positive samples uniformly distributed in $[0,1]\times[-1,1]$ and one negative sample at $(2,0)$, see Figure \ref{fig:example} (left). If $n$ is large, the point at $(2,0)$ is an outlier and the dataset is perfectly separable. The separating hyperplane has the normal vector $\bm w=(1,0)$. Since the methods for the Neyman-Pearson problem are similar to those for the Accuracy at the Top problem, we consider only the five methods described in Sections \ref{sec:obj1} and \ref{sec:obj2} with the hinge loss and no regularization. We provide only the results while postponing the precise computation into Appendix \ref{app:example}.

\begin{figure}[!ht]
\centering
\includegraphics[width=0.9\linewidth]{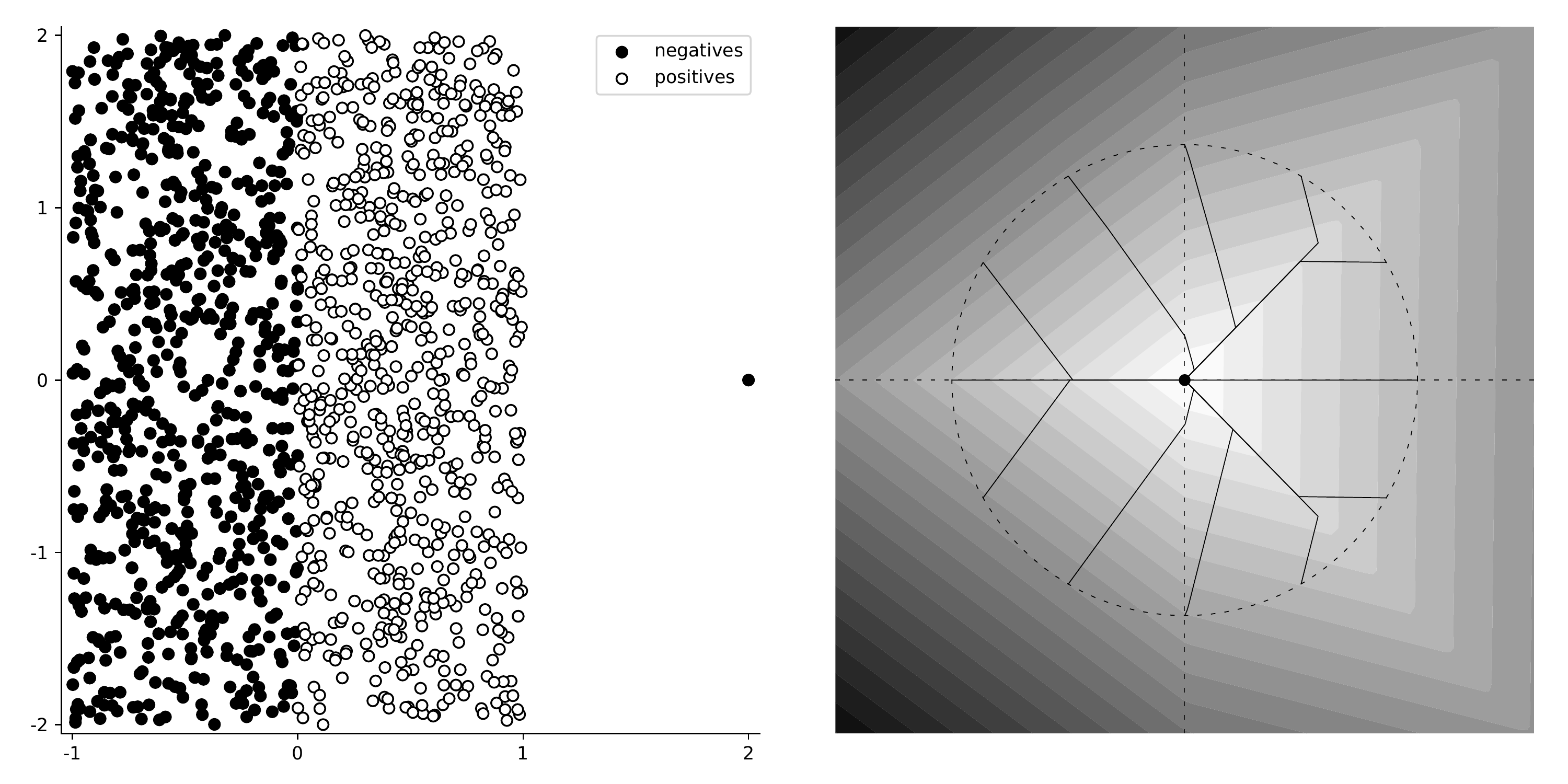}
\caption{Left: distribution of positive (empty circle) and negative samples (full circles) for the example from Section \ref{sec:example}. Right: contour plot for \toppush and its convergence to zero from $12$ initial points.}
\label{fig:example}
\end{figure}

\begin{comment}
\begin{figure}
\begin{minipage}{0.49\linewidth}
\includegraphics[width=\linewidth]{Figures/Example.png}
\end{minipage}
\begin{minipage}{0.49\linewidth}
\includegraphics[width=\linewidth]{Figures/toppush_convergence.pdf}
\end{minipage}
\caption{Left: distribution of positive (empty circle) and negative samples (full circles) for the example from Section \ref{sec:example}. Right: contour plot for \toppush and its convergence to zero from $12$ initial points.}
\label{fig:example}
\end{figure}
\end{comment}

Consider first solution $\bm w_1=(0,0)$. It is obvious that $\bm w_1^\top \bm x=0$ for all $\bm x$ and thus for the threshold we have $t=0$ for all methods with the exception of \patmat where $t=\frac{1}{\beta}(1-\tau)$. The objective then equals to $f(\bm w_1)=1+t$ for all method with the exception of \grill where we need to add the false-positives to obtain $f(\bm w_1)=2$.

Consider now solution $\bm w_2=(1,0)$. Since the \toppush method chooses the largest negative, it sets $t=2$. \toppushk chooses the $k^{\rm th}$ largest negative and sets $t=\frac2k$. \grill selects the $\tau$-top quantile, which for the uniform distribution on the interval $[-1,1]$ equals to $t=1-2\tau$. For the \patmat method, it can be computed that for $\beta\le\tau$ we have $t=\frac{1}{\beta}(1-\tau)$ and finally \topmeank computes the average between the true quantile $1-2\tau$ and the upper bound $1$ and thus $t=1-\tau$. The objective then equals to $f(\bm w_2)=0.5+t$ for all method with the exception of \grill where we have to add the false-positives to get $f(\bm w_2)=0.5+t+0.5(1-t)^2$.

These results are summarized in Table \ref{tab:example}. We chose these two points because both are important: $\bm w_1$ does not generate any separating hyperplane while $\bm w_2$ is the normal vector to the optimal separating hyperplane. Since the dataset is perfectly separable by $\bm w_2$, we expect that $\bm w_2$ provides a lower objective than $\bm w_1$. By shading the better objective in Table~\ref{tab:example} by grey, we see that this did not happen for \toppush and \topmeank.

\begin{table}[!ht]
\caption{Comparison of methods on the very simple problem from Section \ref{sec:example}. Two methods have the global minimum at $\bm w_1=(0,0)$ which does not determine any separating hyperplane. The perfect separating hyperplane is generated by $\bm w_2=(1,0)$.}
\label{tab:example}
\centering
\begin{tabular}{@{}l ccccc@{}}\toprule
& & \multicolumn{2}{c}{$\bm w_1=(0,0)$} & \multicolumn{2}{c}{$\bm w_2=(1,0)$} \\ \cmidrule(lr){3-4} \cmidrule(lr){5-6}
Name & Label & $t$& $f$ & $t$ & $f$ \\
\midrule
\atoppush & \eqref{eq:problem_toppush} & $0$ & \cellcolor{gray!50}$1$ & $2$ & $2.5$ \\
\atoppushk & \eqref{eq:problem_toppushk} & $0$ & $1$ & $\frac2k$ & \cellcolor{gray!50}$0.5+\frac2k$ \\
\agrill & \eqref{eq:problem_grill} & $0$ & $2$ & $1-2\tau$ & \cellcolor{gray!50}$1.5+2\tau(1-\tau)$ \\
\apatmat & \eqref{eq:problem_patmat}  & $\frac{1}{\beta}(1-\tau)$ & $1+\frac{1}{\beta}(1-\tau)$ & $\frac{1}{\beta}(1-\tau)$ & \cellcolor{gray!50}$0.5+\frac{1}{\beta}(1-\tau)$ \\
\atopmeank & \eqref{eq:problem_topmeank} & $0$ & \cellcolor{gray!50}$1$ & $1-\tau$ & $1.5-\tau$ \\
\bottomrule
\end{tabular}
\end{table}

It can be shown that $\bm w_1=(0,0)$ is even the global minimum for \toppush and \topmeank. This raises the question of whether some tricks, such as early stopping or excluding a small ball around zero, cannot overcome this difficulty. The answer is negative as shown in Figure \ref{fig:example} (right). Here, we run the \toppush method from several starting points and it always converges to zero from one of the three possible directions; all of them far from the normal vector to the separating hyperplane. This shows the need for a formal analysis of the framework proposed in Section \ref{sec:framework}.

\section{Theoretical Analysis of the Framework}\label{sec:theory}

In this section, we provide a theoretical analysis of the unified framework from Section \ref{sec:framework}. We focus mainly on the desirable property of convexity and the undesirable feature of having the global minimum at $\bm w=0$. Note that convexity makes solving the problem much easier while $\bm w=0$ does not generate a sensible solution. The results are summarized in Table \ref{tab:methods} below. The proofs are postponed to Appendix \ref{app:proofs}.

\subsection{Convexity}\label{sec:convexity}

Convexity is one of the most important properties in numerical optimization. It ensures that the optimization problem has neither stationary points nor local minima. All points of interest are global minima. Moreover, many optimization algorithms have guaranteed convergence or faster convergence rates in the convex setting \cite{boyd.2004}.

\begin{theorem}\label{thm:convex}
If the threshold $t$ is a convex function of the weights $\bm w$, then function $f(\bm w) = \fns(\bm w, t(\bm w))$ is convex.
\end{theorem}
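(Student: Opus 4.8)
The plan is essentially a one-line composition argument, so I would keep the proof short. First I would expand the objective using the definition of the surrogate false-negative count in \eqref{eq:defin_counts_surr}:
\[
f(\bm w) = \fns(\bm w, t(\bm w)) = \sum_{\bm x\in\mathcal X^+} l\bigl(t(\bm w) - \bm w^\top \bm x\bigr).
\]
Since a finite sum of convex functions is convex, it suffices to show that each summand $\bm w \mapsto l\bigl(t(\bm w) - \bm w^\top \bm x\bigr)$ is convex.

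Next I would analyse the inner map $g_{\bm x}(\bm w) := t(\bm w) - \bm w^\top \bm x$. By hypothesis $t$ is a convex function of $\bm w$, and $\bm w \mapsto -\bm w^\top \bm x$ is affine, hence $g_{\bm x}$, being a sum of a convex and an affine function, is convex. Then I would invoke the standard composition rule: if $l\colon\mathbb R\to\mathbb R$ is convex and non-decreasing and $g_{\bm x}$ is convex, then $l\circ g_{\bm x}$ is convex. This is exactly the place where the blanket assumptions on $l$ are used — convexity and, crucially, monotonicity (the normalization $l(0)=1$ plays no role here); without $l$ being non-decreasing the composition need not be convex. Combining this with the first step yields convexity of $f$.

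The statement has no genuine obstacle; the only point worth flagging is why the same argument does \emph{not} extend to the surrogate false-positive term $\fps(\bm w, t(\bm w)) = \sum_{\bm x\in\mathcal X^-} l\bigl(\bm w^\top \bm x - t(\bm w)\bigr)$. There the inner map $\bm w \mapsto \bm w^\top \bm x - t(\bm w)$ is a sum of an affine function and the \emph{concave} function $-t(\bm w)$, hence concave rather than convex, and the composition rule no longer applies. This is precisely why \patmat in \eqref{eq:problem_patmat} retains only the false-negative term in its objective whereas \grill in \eqref{eq:problem_grill} does not — the asymmetry the theorem is meant to isolate, and which will be exploited in the convexity discussion that follows.
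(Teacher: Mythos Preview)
Your proof is correct and follows essentially the same route as the paper: expand $\fns$ as a sum, note that each inner argument $t(\bm w)-\bm w^\top\bm x$ is convex (convex plus affine), and apply the composition rule for a non-decreasing convex outer function. Your additional remark on why the same reasoning fails for $\fps$ is a nice bonus that the paper's proof does not spell out but alludes to in the surrounding discussion.
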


This theorem allows us to reduce the analysis of the convexity of the methods described above just to the $\bm w\mapsto t$ function. Due to \cite{lapin.2015}, we know that the mean of the $k$ highest values of a vector is a convex function. This immediately implies that \toppush, \toppushk, \topmeank and \npC are convex problems. The convexity of \patmat and \npB follows from the theory of convex function summarized in Lemma \ref{lemma:convex_surr} in the Appendix. At the same time, \grill and \npA are not convex problems. However, they are at least continuous which we prove in Lemma \ref{lemma:quantile_continuous}.

\subsection{Robustness and Global minimum at zero}\label{sec:w_zero}

The convexity derived in the previous section guarantees that there are no local minima. However, as we have shown in Section \ref{sec:example}, it may happen that the global minimum is at $\bm w=0$. This is a highly undesirable situation since $\bm w$ is the normal vector to the separating hyperplane and the zero vector provides no information. In this section, we analyze when this situation happens. Recall that the threshold $t$ depends on the weights $\bm w$; sometimes we stress this by writing $t(\bm w)$.

The first result states that if the threshold $t(\bm w)$ is above a certain value, then zero has a better objective that $\bm w$. If this happens for all $\bm w$, then zero is the global minimum.

\begin{theorem}\label{thm:large_t}
Consider any of these methods: \toppush, \toppushk, \topmeank or \npC. Fix any $\bm w$ and denote the corresponding threshold $t(\bm w)$. If we have
\begin{equation}\label{eq:w_zero}
t(\bm w)\ge \frac{1}{n^+} \sum_{\bm x^+\in\Xcal^+} \bm w^\top \bm x^+,
\end{equation}
then $f(\bm 0)\le f(\bm w)$.
\end{theorem}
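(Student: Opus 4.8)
The plan is to compare the two objective values head-on by means of Jensen's inequality for the convex surrogate $l$. First note that for each of the four methods in question the threshold $t(\bm w)$ is an arithmetic mean of a selection of the scores $\bm w^\top \bm x$ (the largest negative score for \toppush, the mean of the $k$ largest negative scores for \toppushk, the mean of the $n\tau$, resp.\ $n^-\tau$, largest scores for \topmeank, resp.\ \npC). In particular every such threshold vanishes at $\bm w = \bm 0$, so
\begin{equation*}
f(\bm 0) = \frac{1}{n^+}\sum_{\bm x^+\in\Xcal^+} l\bigl(t(\bm 0) - \bm 0^\top \bm x^+\bigr) + \frac\lambda2\norm{\bm 0}^2 = l(0) = 1 .
\end{equation*}
Since $\tfrac\lambda2\norm{\bm w}^2 \ge 0$ and vanishes at $\bm 0$, it is enough to prove $\tfrac{1}{n^+}\fns(\bm w, t(\bm w)) \ge 1$ whenever \eqref{eq:w_zero} holds.

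For the main step, recall $\fns(\bm w, t) = \sum_{\bm x^+\in\Xcal^+} l(t - \bm w^\top \bm x^+)$ and that $l$ is convex. Applying Jensen's inequality to the uniform average over the $n^+$ positive samples gives
\begin{equation*}
\frac{1}{n^+}\fns(\bm w, t(\bm w)) = \frac{1}{n^+}\sum_{\bm x^+\in\Xcal^+} l\bigl(t(\bm w) - \bm w^\top \bm x^+\bigr) \ge l\Bigl( t(\bm w) - \frac{1}{n^+}\sum_{\bm x^+\in\Xcal^+} \bm w^\top \bm x^+ \Bigr).
\end{equation*}
Now \eqref{eq:w_zero} says exactly that the argument of $l$ on the right-hand side is nonnegative; since $l$ is non-decreasing with $l(0) = 1$, the right-hand side is at least $l(0) = 1$. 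Hence $\tfrac{1}{n^+}\fns(\bm w, t(\bm w)) \ge 1 = f(\bm 0)$, and adding back the nonnegative regularizer yields $f(\bm w) \ge f(\bm 0)$.

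There is no real obstacle here: the whole argument is one Jensen step followed by monotonicity of $l$. The only points that merit a line of care are (i) verifying $t(\bm 0) = 0$ for each of the four threshold rules, which is immediate because all scores vanish at $\bm w = \bm 0$; and (ii) remarking why the statement is restricted to these four methods and excludes \patmat, \npB, \grill and \npA — for the first two the surrogate-quantile threshold does not vanish at $\bm 0$, so $f(\bm 0) \neq 1$ in general, and the latter two carry the extra $\fps$ term in the objective, so a single Jensen step no longer closes the inequality.
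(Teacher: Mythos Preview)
Your proof is correct and essentially the same as the paper's: both reduce to showing $\tfrac{1}{n^+}\fns(\bm w,t(\bm w))\ge 1$ after observing $t(\bm 0)=0$ and hence $f(\bm 0)=1$. The only cosmetic difference is that the paper bounds $l$ below by its supporting line at $0$, $l(z)\ge 1+cz$ with $c\ge 0$, and then sums, whereas you apply Jensen's inequality first and then use monotonicity of $l$; these are two equivalent one-step uses of convexity.
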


We can use this result immediately to deduce that some methods may have the global minimum at $\bm w=0$. More specifically, \toppush fails if there are outliers (and \toppushk fails if there are many outliers) and \topmeank fails whenever there are many positive samples.

\begin{corollary}\label{cor:toppush}
Consider the \toppush method. If the positive samples lie in the convex hull of negative samples, then $\bm w=0$ is the global minimum.
\end{corollary}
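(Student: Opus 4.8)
The plan is to apply Theorem~\ref{thm:large_t} to \toppush and verify its hypothesis~\eqref{eq:w_zero} for every weight vector $\bm w$ under the stated geometric assumption. For \toppush the threshold is $t(\bm w) = z_{[1]}^- = \max_{\bm x^- \in \Xcal^-} \bm w^\top \bm x^-$, the largest score among negative samples. So the corollary will follow once I show that
\begin{equation*}
\max_{\bm x^- \in \Xcal^-} \bm w^\top \bm x^- \;\ge\; \frac{1}{n^+} \sum_{\bm x^+ \in \Xcal^+} \bm w^\top \bm x^+
\qquad \text{for all } \bm w.
\end{equation*}

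The key step is the convex-hull argument. By assumption each positive sample $\bm x^+$ lies in the convex hull of the negatives, so $\bm x^+ = \sum_{j} \alpha_j^{(\bm x^+)} \bm x_j^-$ with nonnegative coefficients summing to one. Taking the inner product with $\bm w$ gives $\bm w^\top \bm x^+ = \sum_j \alpha_j^{(\bm x^+)} (\bm w^\top \bm x_j^-) \le \max_{\bm x^- \in \Xcal^-} \bm w^\top \bm x^-$, since a convex combination of numbers never exceeds their maximum. This bound holds for each positive sample individually, hence also for their average over $\Xcal^+$, which is exactly the right-hand side of~\eqref{eq:w_zero}. Thus the hypothesis of Theorem~\ref{thm:large_t} holds for this arbitrary $\bm w$, giving $f(\bm 0) \le f(\bm w)$; since $\bm w$ was arbitrary, $\bm w = \bm 0$ is a global minimizer.

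I do not anticipate a serious obstacle here — the argument is short once Theorem~\ref{thm:large_t} is in hand. The only point requiring a little care is making sure the convex-hull membership is used correctly: it must hold for \emph{every} positive sample so that the inequality survives averaging, and one should note the bound $\bm w^\top \bm x^+ \le \max_{\bm x^-} \bm w^\top \bm x^-$ is valid for all $\bm w$ simultaneously (the convex combination coefficients depend only on the fixed sample geometry, not on $\bm w$). A minor remark worth including is that the conclusion is really "a global minimum" rather than "the unique global minimum" — other minimizers may coexist — and that this already captures the degenerate behavior observed for \toppush in the example of Section~\ref{sec:example}, where the single outlier at $(2,0)$ puts the positive block inside the convex hull of the negatives.
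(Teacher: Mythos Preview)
Your argument is correct and matches the paper's intended route: the paper states the corollary as an immediate consequence of Theorem~\ref{thm:large_t} (and restates the relevant hypothesis for \toppush in Lemma~\ref{lemma:bound}) without spelling out the convex-hull step, which you supply cleanly. There is nothing to add.
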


\begin{corollary}\label{cor:topmean}
Consider the \topmeank method. If $n^+\ge n\tau$, then $\bm w=0$ is the global minimum.
\end{corollary}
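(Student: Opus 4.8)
The plan is to reduce everything to Theorem~\ref{thm:large_t}: since \topmeank is one of the four methods covered by that theorem, it suffices to check that its hypothesis~\eqref{eq:w_zero} holds for \emph{every} $\bm w$ when $n^+\ge n\tau$. Indeed, once $t(\bm w)\ge\frac1{n^+}\sum_{\bm x^+\in\Xcal^+}\bm w^\top\bm x^+$ is verified for all $\bm w$, Theorem~\ref{thm:large_t} gives $f(\bm 0)\le f(\bm w)$ for all $\bm w$, i.e.\ $\bm w=\bm 0$ is a global minimum. So the whole corollary comes down to one elementary inequality comparing the threshold $t(\bm w)=\frac1{n\tau}(z_{[1]}+\dots+z_{[n\tau]})$, the mean of the $n\tau$ largest of \emph{all} scores, with the mean of the positive scores.

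Fix $\bm w$, let $s_1\ge s_2\ge\dots\ge s_n$ be the sorted scores $\{\bm w^\top\bm x\mid\bm x\in\Xcal\}$, and write $A_k=\frac1k\sum_{i=1}^k s_i$, so $t(\bm w)=A_{n\tau}$. First I would show $k\mapsto A_k$ is non-increasing: from $A_{k+1}=\frac{kA_k+s_{k+1}}{k+1}$ together with $s_{k+1}\le s_k\le A_k$ one obtains $A_{k+1}\le A_k$. Since $n\tau\le n^+$, this yields $t(\bm w)=A_{n\tau}\ge A_{n^+}$. Second, $A_{n^+}$ is the mean of the $n^+$ largest scores among all $n$ of them, whereas $\frac1{n^+}\sum_{\bm x^+\in\Xcal^+}\bm w^\top\bm x^+$ is the mean of the $n^+$ scores associated with positive samples, a particular size-$n^+$ subcollection; the sum of the $n^+$ largest dominates the sum of any $n^+$ of them, hence $A_{n^+}\ge\frac1{n^+}\sum_{\bm x^+\in\Xcal^+}\bm w^\top\bm x^+$. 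Chaining the two bounds gives exactly~\eqref{eq:w_zero}, and Theorem~\ref{thm:large_t} concludes.

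I do not expect a real obstacle here; the argument is just the monotonicity of truncated means plus the extremal property of "largest $k$ scores." The only delicate point is when $n\tau$ is not an integer, in which case $A_{n\tau}$ must be understood through the standard extension of the "sum of the $k$ largest" function to real $k$ (the same linear-programming/support-function description used in Section~\ref{sec:convexity} to argue its convexity); the monotonicity $A_{n\tau}\ge A_{n^+}$ persists for that extension, so the proof is unaffected. A verbatim repetition of this argument — with ``all scores'' replaced by ``negative scores'' — would establish the analogous statement for \npC.
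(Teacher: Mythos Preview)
Your proposal is correct and follows exactly the route the paper intends: the corollary is stated without proof as an immediate consequence of Theorem~\ref{thm:large_t}, and your argument supplies precisely the elementary verification of hypothesis~\eqref{eq:w_zero} (mean of the top $n\tau$ scores dominates the mean of any $n^+\ge n\tau$ scores, in particular the positive ones) that the paper leaves implicit. The monotonicity-of-truncated-means step and the extremal-subset step are both sound, and your caveat about non-integer $n\tau$ is a reasonable precaution though the paper tacitly treats $n\tau$ as an integer throughout.
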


The proof of Theorem \ref{thm:large_t} employs the fact that all methods in the theorem statement have only false-negatives in the objective. If $\bm w_1=0$, then $\bm w_1^\top \bm x=0$ for all samples $\bm x$, the threshold equals to $t=0$ and the objective equals to one. If the threshold is large for some $\bm w$, many positives are below the threshold and the false-negatives have the average surrogate value larger than one. In such a case, $\bm w=0$ becomes the global minimum.

There are two fixes to this situation. The first one is to include false-positives in the objective. The second one is to move the threshold from zero even when all scores $\bm w^\top \bm x$ equal to zero. The first approach was taken by \grill and \npA and necessarily results in the loss of convexity. The other approach was taken by our methods \patmat and \npB. It keeps the convexity and, as we derive in the next result, the global minimum is away from zero.

\begin{theorem}\label{thm:patmat_zero}
Consider the \patmat or \npB method with the hinge surrogate and no regularization. Assume that for some $\bm w$ we have
\begin{equation}\label{eq:patmat_zero}
\frac{1}{n^+}\sum_{\bm x^+\in \Xcal^+}\bm w^\top \bm x^+ > \frac{1}{n^-}\sum_{\bm x^-\in \Xcal^-}\bm w^\top \bm x^-.
\end{equation}
Then there exists scaling parameter $\beta_0$ from \eqref{eq:defin_quantile0} such that for all $\beta\in(0,\beta_0)$ we have $f(\bm w)<f(\bm 0)$.
\end{theorem}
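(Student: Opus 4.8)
The plan is to exploit that with the hinge loss $l(z)=\max\{0,1+z\}$ everything in sight is piecewise affine, and that for $\beta$ small enough all the relevant arguments land in the affine (non‑clipped) regime, so that both the surrogate quantile and the objective can be written in closed form and compared directly. Throughout, write $\bar z=\frac1n\sum_{\bm x\in\Xcal}\bm w^\top\bm x$, $\bar z^+=\frac1{n^+}\sum_{\bm x^+\in\Xcal^+}\bm w^\top\bm x^+$ and $\bar z^-=\frac1{n^-}\sum_{\bm x^-\in\Xcal^-}\bm w^\top\bm x^-$, and recall $\lambda=0$. First I would evaluate both quantities at $\bm w=\bm 0$: all scores vanish, so the surrogate quantile equation \eqref{eq:defin_quantile_surr} collapses to $l(-\beta t)=\tau$, which for $\tau\in(0,1)$ forces $1-\beta t=\tau$, i.e.\ $t(\bm 0)=(1-\tau)/\beta$; hence $f(\bm 0)=\frac1{n^+}\sum_{\bm x^+}l\big(t(\bm 0)\big)=1+(1-\tau)/\beta$. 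The point is that $f(\bm 0)\to+\infty$ as $\beta\to0$, so it suffices to show that $f(\bm w)$ diverges strictly more slowly.

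Next I would compute $t(\bm w)$ for the fixed $\bm w$ and claim $\bar t_{\rm Q}(\bm w)=\bar z+(1-\tau)/\beta$ once $\beta$ is small. Substituting this candidate into the left-hand side of \eqref{eq:defin_quantile_surr}, each argument becomes $\beta(\bm w^\top\bm x-t)=-(1-\tau)+\beta(\bm w^\top\bm x-\bar z)$, so $1+\beta(\bm w^\top\bm x-t)=\tau+\beta(\bm w^\top\bm x-\bar z)$, which is strictly positive for all $\bm x$ as soon as $\beta\,\max_{\bm x}|\bm w^\top\bm x-\bar z|<\tau$ (under \eqref{eq:patmat_zero} the scores are not all equal, so this maximum is positive and the bound is meaningful). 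In that regime the hinge acts affinely, and averaging $\tau+\beta(\bm w^\top\bm x-\bar z)$ over $\bm x\in\Xcal$ gives exactly $\tau$. Since $t\mapsto\frac1n\sum_{\bm x}l(\beta(\bm w^\top\bm x-t))$ is convex and non-increasing, with value $0<\tau$ for $t$ large, and is strictly decreasing at our candidate (the largest-score term is still in its increasing affine part), the candidate is the unique, hence the maximal, solution of \eqref{eq:defin_quantile_surr}, as required.

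Then I would plug $t(\bm w)=\bar z+(1-\tau)/\beta$ into the objective. For $\beta$ small, $1+t(\bm w)-\bm w^\top\bm x^+=1+\bar z+(1-\tau)/\beta-\bm w^\top\bm x^+>0$ for every $\bm x^+$, so $f(\bm w)=\frac1{n^+}\sum_{\bm x^+}\big(1+t(\bm w)-\bm w^\top\bm x^+\big)=1+\bar z-\bar z^++(1-\tau)/\beta$. Subtracting the value at $\bm 0$ gives $f(\bm w)-f(\bm 0)=\bar z-\bar z^+$, and the identity $\bar z=\frac{n^+\bar z^++n^-\bar z^-}{n}$ turns this into $\frac{n^-}{n}(\bar z^--\bar z^+)$, which is $<0$ exactly by hypothesis \eqref{eq:patmat_zero}. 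For \npB the argument is verbatim the same with $\Xcal$ replaced by $\Xcal^-$ in \eqref{eq:defin_quantile_surr} replaced by \eqref{eq:defin_quantile_surr_np}, yielding $t(\bm w)=\bar z^-+(1-\tau)/\beta$ and directly $f(\bm w)-f(\bm 0)=\bar z^--\bar z^+<0$. Taking $\beta_0$ to be the smallest of the finitely many upper bounds on $\beta$ produced above finishes the proof.

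The delicate part is purely the book-keeping on $\beta$: I must exhibit an explicit $\beta_0=\beta_0(\bm w,\Xcal)$ below which simultaneously (i) every summand of the surrogate-quantile equation has positive hinge argument (so the equation linearizes), (ii) the quantile map is strictly decreasing at the computed root (so it really is the \emph{maximal} root demanded by the definition), and (iii) every hinge argument $1+t(\bm w)-\bm w^\top\bm x^+$ in $\fns$ is positive. All three reduce to $\beta$ being smaller than an explicit constant depending on $\bm w$ and the data, but this is the only place care is needed. It is also worth recording in the write-up that the statement genuinely uses the hinge loss (a general non-decreasing convex $l$ with $l(0)=1$ would not linearize) and $\tau\in(0,1)$ (so that $(1-\tau)/\beta\to+\infty$ and $1-\beta t(\bm 0)=\tau$ is solvable).
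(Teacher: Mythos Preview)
Your proposal is correct and follows essentially the same route as the paper: linearize the hinge for small $\beta$ so that the surrogate-quantile equation becomes affine, read off $t(\bm w)=\bar z+(1-\tau)/\beta$ (resp.\ $\bar z^-+(1-\tau)/\beta$ for \npB), compute $f(\bm 0)=1+(1-\tau)/\beta$, and compare. The paper packages the $\beta$-constraints into the single explicit value $\beta=\min\{\tau/(\bar z-z_{\min}),\,(1-\tau)/(z_{\max}-\bar z)\}$ and chains the final comparison as an inequality rather than computing $f(\bm w)-f(\bm 0)$ directly, but the substance is identical; your added remark on uniqueness of the root (via strict decrease of the surrogate-quantile map at the candidate) is a small refinement the paper leaves implicit.
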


We would like to compare the results of Theorems \ref{thm:large_t} and \ref{thm:patmat_zero}. The former states that if the average score $\bm w^\top \bm x^-$ for a \emph{small number of largest} negative samples is larger than the average score $\bm w^\top \bm x^+$ for \emph{all} positive samples, then the corresponding method fails. We show in Lemma \ref{lemma:bound} in the Appendix that this number equals to $1$ for \toppush, to $k$ for \toppushk and to $n^-\tau$ for \npC. On the other hand, Theorem \ref{thm:patmat_zero} states that if the average score $\bm w^\top \bm x^+$ for \emph{all} positive samples is larger than the average score $\bm w^\top \bm x^-$ for \emph{all} negative samples, then \patmat and \npB do not have problems with zero. Note that since we push positives to the top, for a good classifier $\bm w^\top \bm x^+$ should be higher than $\bm w^\top \bm x^-$.

\subsection{Threshold comparison}

Theorem \ref{thm:large_t} and the last paragraph of Section \ref{sec:w_zero} show that the larger the threshold $t$ for a method, the more inclined it is to have the global minimum at $\bm w=0$. This gives rise to the necessity to compare the threshold values.

The threshold is computed from scores $z=\bm w^\top \bm x$. Since the threshold for \toppush, \toppushk and \npC is the average of $1$, $k$ and $n^-\tau$, respectively, of largest scores corresponding to negative samples, and since $k\ll n^-\tau$ the threshold for \toppush is greater than the one for \toppushk which is in turn greater than the one for \npC. Moreover, the threshold for \patmat is greater than the threshold for \topmeank due to Lemma \ref{lemma:thresholds1} which in turn is greater than the threshold for \grill due to its definition. This holds true for their Neyman-Pearson counterparts as well. For good classifiers, the thresholds for the Neyman-Pearson methods from Section \ref{sec:obj3} are smaller than their Accuracy at the Top counterparts from Section \ref{sec:obj2}; for a precise formulation see Lemma \ref{lemma:thresholds2}.

\subsection{Method comparison}

We provide a visualization of the obtained results in Table \ref{tab:methods} and Figure \ref{fig:thresholds}. Table \ref{tab:methods} gives the basic characterization of the methods such as their definition label, their source, the criterion they approximate, the hyperparameters, whether the method is convex and whether it has problems with $\bm w=0$ based on Theorem \ref{thm:large_t}. 

%
%
%
%
%
% TO THE PUBLISHER
%
%
%
%
% Sorry, there are hardcoded footnotes in the table. But I spend one hour how to get and reference footnotes in the floating environment and then I gave up.
%
%
%
%

\begin{table}[!ht]
\caption{Summary of the methods from Section \ref{sec:framework}. The table shows their definition label, the source or the source they are based on, the criterion they approximate, the hyperparameters, whether the method is convex and whether the method is robust (in the sense of having problems with $\bm w=0$).}
\label{tab:methods}
\centering
\begin{tabular}{>{\kern-0.5\tabcolsep}ll ccccc<{\kern-0.5\tabcolsep}}\toprule
Name & Source & Definition & Criterion & Hyperpars & Convex & Robust \\
\midrule
 \atoppush & \cite{Li_TopPush} & \eqref{eq:problem_toppush} & \eqref{eq:problem_top} & $\lambda$ & \cmark & \xmark\\
\atoppushk & Ours\textsuperscript{2} & \eqref{eq:problem_toppushk} & \eqref{eq:problem_top} & $\lambda$, $k$ & \cmark & \xmark\\ \headcol
 \agrill & \cite{Grill_2016} & \eqref{eq:problem_grill}  & \eqref{eq:problem_aatp_orig} & $\lambda$ & \xmark & \cmark\\ \headcol
\apatmat & Ours\textsuperscript{2} & \eqref{eq:problem_patmat} & \eqref{eq:problem_aatp_orig} & $\beta$, $\lambda$ & \cmark & \cmark\\ \headcol
 \atopmeank & Ours\textsuperscript{2} & \eqref{eq:problem_topmeank} & \eqref{eq:problem_aatp_orig} & $\lambda$ & \cmark & \xmark\\ 
\anpA & Ours\textsuperscript{2} & \eqref{eq:problem_grill_np} & \eqref{eq:problem_np} & $\lambda$ & \xmark & \cmark\\
 \anpB & Ours\textsuperscript{2} & \eqref{eq:problem_patmat_np} & \eqref{eq:problem_np} & $\beta$, $\lambda$ & \cmark & \cmark\\
\anpC & Ours\textsuperscript{2} & \eqref{eq:problem_topmeank_np} & \eqref{eq:problem_np} & $\lambda$ & \cmark & \xmark\\
\bottomrule
\end{tabular}
\end{table}

\tikzstyle{bubbleA} = [rectangle, rounded corners, minimum width=2cm, minimum height=1cm,text centered, draw=black]
\tikzstyle{bubbleB} = [bubbleA, fill=gray!40]
\tikzstyle{bubbleC} = [bubbleA, fill=gray!125]
\tikzstyle{arrowA} = [thick,->,>=stealth]
\tikzstyle{arrowB} = [thick,dotted,->,>=stealth]

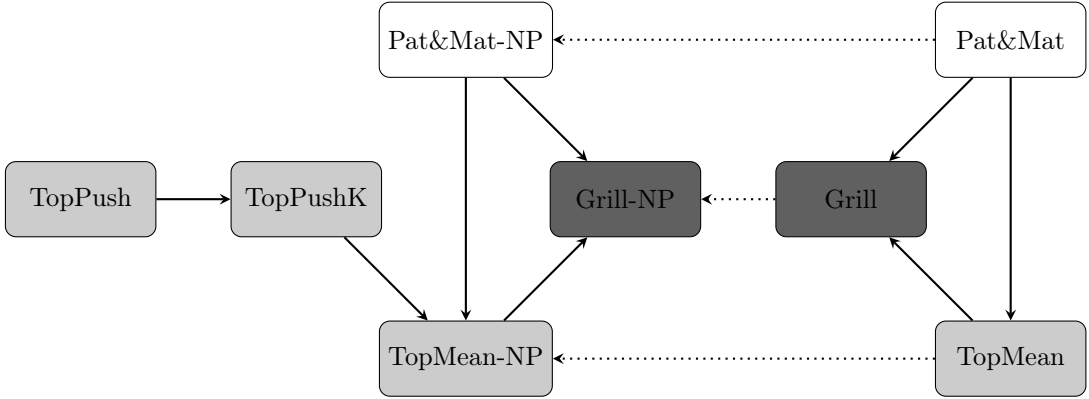
\begin{figure}[!ht]
\centering
\begin{tikzpicture}[node distance=3cm]
\node (tikz1) [bubbleB] {\atoppush};
\node (tikz2) [bubbleB, right of=tikz1] {\atoppushk};
\node (tikz3) [bubbleA, above right of=tikz2] {\anpB};
\node (tikz4) [bubbleB, below right of=tikz2] {\anpC};
\node (tikz5) [bubbleC, below right of=tikz3] {\anpA};
\draw [arrowA] (tikz1) -- (tikz2);
\draw [arrowA] (tikz2) -- (tikz4);
\draw [arrowA] (tikz3) -- (tikz5);
\draw [arrowA] (tikz4) -- (tikz5);
\draw [arrowA] (tikz3) -- (tikz4);
\node (tikz8) [bubbleC, right of=tikz5] {\agrill};
\node (tikz6) [bubbleA, above right of=tikz8] {\apatmat};
\node (tikz7) [bubbleB, below right of=tikz8] {\atopmeank};
\draw [arrowA] (tikz6) -- (tikz8);
\draw [arrowA] (tikz7) -- (tikz8);
\draw [arrowA] (tikz6) -- (tikz7);
\draw [arrowB] (tikz6) -- (tikz3);
\draw [arrowB] (tikz7) -- (tikz4);
\draw [arrowB] (tikz8) -- (tikz5);
\end{tikzpicture}
\caption{Summary of the methods from Section \ref{sec:framework}. Methods in white and light gray are convex while methods in dark gray are non-convex. Methods in light gray are vulnerable to have the global minimum at $\bm w=0$. Full (dotted) arrow pointing from one method to another show that the latter method has always (usually) smaller threshold.}
\label{fig:thresholds}
\end{figure}

A similar comparison is performed in Figure \ref{fig:thresholds}. Methods in white and light grey are convex while methods in dark grey are non-convex.\footnote{\toppushk is a modification of \cite{Li_TopPush}, \patmat and \npB are loosely connected to \cite{rockafellar2000optimization}, \topmeank and \npC are inspired by \cite{lapin.2015} and \npA is a modification of \cite{Grill_2016}.\label{foot:methods}} Based on Theorem \ref{thm:large_t}, four methods in light grey are vulnerable to have the global minimum at $\bm w=0$. This theorem states that the higher the threshold, the more vulnerable the method is. This dependence is depicted by the full arrows. If it points from one method to another, the latter one has a smaller threshold and thus is less vulnerable to this undesired global minima. The dotted arrows indicate that this holds true usually but not always. This complies with Corollaries \ref{cor:toppush} and \ref{cor:topmean} which state that \toppush and \topmeank are most vulnerable.

\section{Numerical Considerations}\label{sec:num1}

So far, we have performed a theoretical analysis of the presented methods. In this section, we present some numerical considerations as well.

\subsection{Gradient computation and Variable reduction}

The decision variables in \eqref{eq:problem2} are the normal vector of the separating hyperplane $\bm w$ and the threshold $t$. To apply an efficient optimization method, we need to compute gradients. The simplest idea is to perform the gradient only with respect to $\bm w$ and then recompute $t$. This was followed in \cite{Grill_2016}.

However, there is a more sophisticated way based on the implicit function theorem. For each $\bm w$, the threshold $t$ can be computed uniquely. We stress this dependence by writing $t(\bm w)$ instead of $t$. By doing so, we effectively remove the threshold $t$ from the decision variables and $\bm w$ remains the only decision variable. Note that the convexity is preserved. Then we can compute the derivative via the chain rule
\begin{equation}\label{eq:derivatives}
\aligned
f(\bm w) &= \frac{1}{n^+}\sum_{\bm x\in\Xcal^+} l(t(\bm w) - \bm w^\top \bm x) + \frac{\lambda}{2}\norm{\bm w}^2, \\
\nabla f(\bm w) &= \frac{1}{n^+}\sum_{\bm x\in\Xcal^+} l'(t(\bm w) - \bm w^\top \bm x)(\nabla t(\bm w) - \bm x) + \lambda \bm w.
\endaligned
\end{equation}
The only remaining part is the computation of $\nabla t(w)$. We show an efficient computation in Appendix \ref{app:derivatives}.

\subsection{Stochastic gradient descent}

From the objective and gradient evaluation in \eqref{eq:derivatives}, it is evident that we need to evaluate $\bm w^\top \bm x^+$ for all $\bm x^+\in\Xcal^+$. At the same time, to compute the threshold, all methods require to evaluate $\bm w^\top \bm x^-$ for all $\bm x^-\in\Xcal^-$. Thus, at every iteration, we need to evaluate $\bm w^\top \bm x$ for all samples $\bm x$. Even though some heuristics %, such as updating the index with the largest score $z_{[1]}^-$ in \toppush only every few iterations,
are possible, we decided to employ the standard technique of stochastic gradient descent where the dataset is randomly divided into several minibatches and the threshold $t$, objective $f(\bm w)$ and its gradient $\nabla f(\bm w)$ are updated only on the current minibatch.

\subsection{Performance criteria}

In Section \ref{sec:framework}, we described three classes of methods, each optimizing a different criterion. Utilizing the ``Criterion'' column in Table \ref{tab:methods}, we summarize these three criteria in Table \ref{tab:criteria}. We recall that the precision and recall are for a threshold $t$ defined by
\begin{equation}\label{eq:prec_rec}
{\rm Precision}=\frac{\tp(\bm w,t)}{\tp(\bm w,t)+\fp(\bm w,t)},\quad{\rm Recall}=\frac{\tp(\bm w,t)}{\tp(\bm w,t)+\fn(\bm w,t)}.
\end{equation}
We will show the Precision-Recall (PR) and the Precision-$\tau$ (P$\tau$) curves. The former is a well-accepted visualization for highly unbalanced data \cite{davis2006relationship} while the latter better reflects that the methods concentrate only on the top of the scores.

\begin{table}[!ht]
\caption{The criteria for the three classes from Section \ref{sec:framework}.}
\label{tab:criteria}
\centering
\begin{tabular}{@{}l l@{}}\toprule
Name & Description \\ \midrule
Positives@Top & Fraction of positives above the largest negative \eqref{eq:problem_top} \\
Positives@Quantile & Fraction of positives above the $\tau$-quantile \eqref{eq:problem_aatp_orig} \\
Positives@NP & Fraction of positives above the Neyman-Pearson threshold \eqref{eq:problem_np} \\
\bottomrule
\end{tabular}
\end{table}

\subsection{Implementational details and Hyperparameter choice}

We recall that all methods fall into the framework of either \eqref{eq:problem1} or \eqref{eq:problem2}. Since the threshold $t$ depends on the weights $\bm w$, we can consider the decision variable to be only $\bm w$. Then to apply a method, we implemented the following iterative procedure. At iteration $j$, we have the weights $\bm w^j$ to which we compute the threshold $t^j=t(\bm w^j)$. Then according to \eqref{eq:derivatives} we compute the gradient of the objective and apply the ADAM descent scheme \cite{kingma2014adam}. All methods were run for $1000$ iterations. Moreover, for large datasets, we replaced the standard gradient descent by its stochastic counterpart \cite{bottou.2010} where the threshold $t^j$ and the gradient are computed only on a part of the dataset called a minibatch. All methods used the hinge surrogate \eqref{eq:defin_surrogate}.

We run the methods for the following hyperparameters
\begin{equation}\label{eq:beta1}
\aligned
\beta &\in  \left\{0.0001,\ 0.001,\ 0.01,\ 0.1,\ 1,\ 10\right\}, \\
\lambda &\in \left\{0,\ 0.00001,\ 0.0001,\ 0.001,\ 0.01,\ 0.1\right\}, \\
k &\in \left\{1,3,5,10,15,20\right\}.
\endaligned
\end{equation}
For \toppushk, \patmat and \npB we fixed $\lambda=0.001$ to have six hyperparameters for all methods. For all datasets, we choose the hyperparameter which minimized the corresponding criterion from Table \ref{tab:methods} on the validation set. The results are computed on the testing set which was not used during training the methods.

Note that \toppush was originally implemented in the dual. However, to allow for the same framework and for the stochastic gradient descent, we implemented it in the primal. These two approaches are, at least theoretically, equivalent. For \grill and \npA we stick to the original paper and apply the projection of weights $\bm w$ onto the $l_2$-unit ball after each gradient step. This is not advised for other methods as convexity would be lost.

\subsection{Dataset description}

For the numerical results, we considered nine datasets summarized in Table \ref{tab:counts}. Five of them are standard and can be downloaded from the UCI repository. Datasets Ionosphere \cite{ionosphere} and Spambase are small, Hepmass \cite{hepmass} contains a large number of samples while Gisette \cite{gisette} contains a large number of features. To make the dataset more difficult, we created Hepmass 10\% dataset which is a random subset of the Hepmass dataset and reduces the fraction of positive samples from 50\% to 10\%.

Besides these datasets, we also considered two real-world datasets CTA and NetFlow obtained from existing network intrusion detection systems. Following \cite{Grill_2016}, we considered both their vanilla variant and artificially introduced noise MLT by removing positive samples from the training sets but keeping them in the testing set. This simulates the situation that analysts failed to identify some malware. This makes the dataset difficult since the basic assumption of training and testing set having the same distribution is violated. More description about these datasets can be found in Appendix \ref{sec:cisco}.

\begin{table}[!ht]
\caption{Structure of the used datasets. The training, validation and testing sets show the number of features $m$, minibatches $n_{\rm minibatch}$, samples $n$ and the fraction of positive samples $\frac{n^+}{n}$.}
\label{tab:counts}
\centering
\begin{tabular}{@{}lllllllll@{}}
\toprule
 &  & & \multicolumn{2}{c}{Training} & \multicolumn{2}{c}{Validation} & \multicolumn{2}{c}{Testing} \\ \cmidrule(lr){4-5} \cmidrule(lr){6-7} \cmidrule(l){8-9} 
 & $m$ & $n_{\rm minibatch}$ & $n$ & $\frac{n^+}{n}$ & $n$ & $\frac{n^+}{n}$ & $n$ & $\frac{n^+}{n}$ \\ \midrule
Ionosphere & $34$ & $1$ & $175$ & $36.0\%$ & $88$ & $35.2\%$ & $88$ & $36.4\%$ \\
Spambase & $57$ & $1$ & $2300$ & $39.4\%$ & $1150$ & $39.4\%$ & $1151$ & $39.4\%$ \\
Gisette & $5001$ & $1$ & $6000$ & $50.0\%$ & $500$ & $50.0\%$ & $500$ & $50.0\%$ \\
Hepmass & $28$ & $40$ & $5250000$ & $50.0\%$ & $2625040$ & $50.0\%$ & $2624960$ & $50.0\%$ \\
Hepmass 10 & $28$ & $40$ & $2916636$ & $10.0\%$ & $1458320$ & $10.0\%$ & $1458240$ & $10.0\%$ \\
CTA & $34$ & $26$ & $263796$ & $2.1\%$ & $263796$ & $2.2\%$ & $527488$ & $2.1\%$ \\
CTA MLT & $34$ & $9$ & $91314$ & $0.3\%$ & $91314$ & $0.3\%$ & $182592$ & $2.1\%$ \\
NetFlow & $25$ & $8$ & $142676$ & $8.7\%$ & $142674$ & $8.7\%$ & $285454$ & $8.6\%$ \\
NetFlow MLT & $25$ & $8$ & $145260$ & $2.7\%$ & $145261$ & $2.8\%$ & $285508$ & $8.6\%$ \\
\bottomrule
\end{tabular}
\end{table}

\section{Numerical experiments}\label{sec:num2}

In this section, we present the numerical results. Note that all results are computed on the testing set which was not available during training. We run the algorithms from $\bm w^0=0$ and from a randomly generated point in the interval $[-1,1]$. Since the former showed a better performance in $58.7\%$ cases, we show only the results starting from zero.
\begin{figure}
\begin{tikzpicture}
  \pgfplotsset{}
  \begin{groupplot}[group style = {group size = 2 by 3, horizontal sep = 5pt, vertical sep=5pt}, grid=major, grid style={dashed, gray!50}, ymin=0, ymax=1.05, ytick={0,0.2,0.4,0.6,0.8,1}]
      \nextgroupplot[col1, xticklabels={}, ylabel={Precision}, legend style = legendStyleA]
          \addplot [line1] table[x index=0, y index=1] {\tabAA}; \addlegendentry{\atoppushk}
          \addplot [line2] table[x index=0, y index=2] {\tabAA};   \addlegendentry{\apatmat}
          \addplot [line3] table[x index=0, y index=3] {\tabAA}; \addlegendentry{\anpB}
          \addplot [line4] table[x index=0, y index=4] {\tabAA};   \addlegendentry{\anpC}%
      \nextgroupplot[col3, xticklabels={}, yticklabels={}]
          \addplot [line1] table[x index=0, y index=1] {\tabAB};
          \addplot [line2] table[x index=0, y index=2] {\tabAB};
          \addplot [line3] table[x index=0, y index=3] {\tabAB}; 
          \addplot [line4] table[x index=0, y index=4] {\tabAB};            
      \nextgroupplot[col1, xticklabels={}, ylabel={Precision}] 
          \addplot [line1] table[x index=0, y index=1] {\tabBA}; 
          \addplot [line2] table[x index=0, y index=2] {\tabBA}; 
          \addplot [line3] table[x index=0, y index=3] {\tabBA}; 
          \addplot [line4] table[x index=0, y index=4] {\tabBA};   
      \nextgroupplot[col3, xticklabels={}, yticklabels={}] 
          \addplot [line1] table[x index=0, y index=1] {\tabBB}; 
          \addplot [line2] table[x index=0, y index=2] {\tabBB}; 
          \addplot [line3] table[x index=0, y index=3] {\tabBB}; 
          \addplot [line4] table[x index=0, y index=4] {\tabBB};   
      \nextgroupplot[col1, xlabel={$\tau$}, ylabel={Precision}]
          \addplot [line1] table[x index=0, y index=1] {\tabCA};
          \addplot [line2] table[x index=0, y index=2] {\tabCA};
          \addplot [line3] table[x index=0, y index=3] {\tabCA}; 
          \addplot [line4] table[x index=0, y index=4] {\tabCA};   
      \nextgroupplot[col3, xlabel={$\tau$}, yticklabels={}]
          \addplot [line1] table[x index=0, y index=1] {\tabCB};
          \addplot [line2] table[x index=0, y index=2] {\tabCB};
          \addplot [line3] table[x index=0, y index=3] {\tabCB}; 
          \addplot [line4] table[x index=0, y index=4] {\tabCB};   
      \end{groupplot}
\node at ($(group c2r1) + (-3.5,3.3)$) {\ref{grouplegendA}}; 
\end{tikzpicture}
\caption{P$\tau$ curves for datasets NetFlow (top row), Gisette (middle row) and Hepmass (bottom row) and two different zooms of $\tau\in[0,1]$ and $\tau\in[0,0.01]$ (columns).}
\label{fig:ptau}
\end{figure}
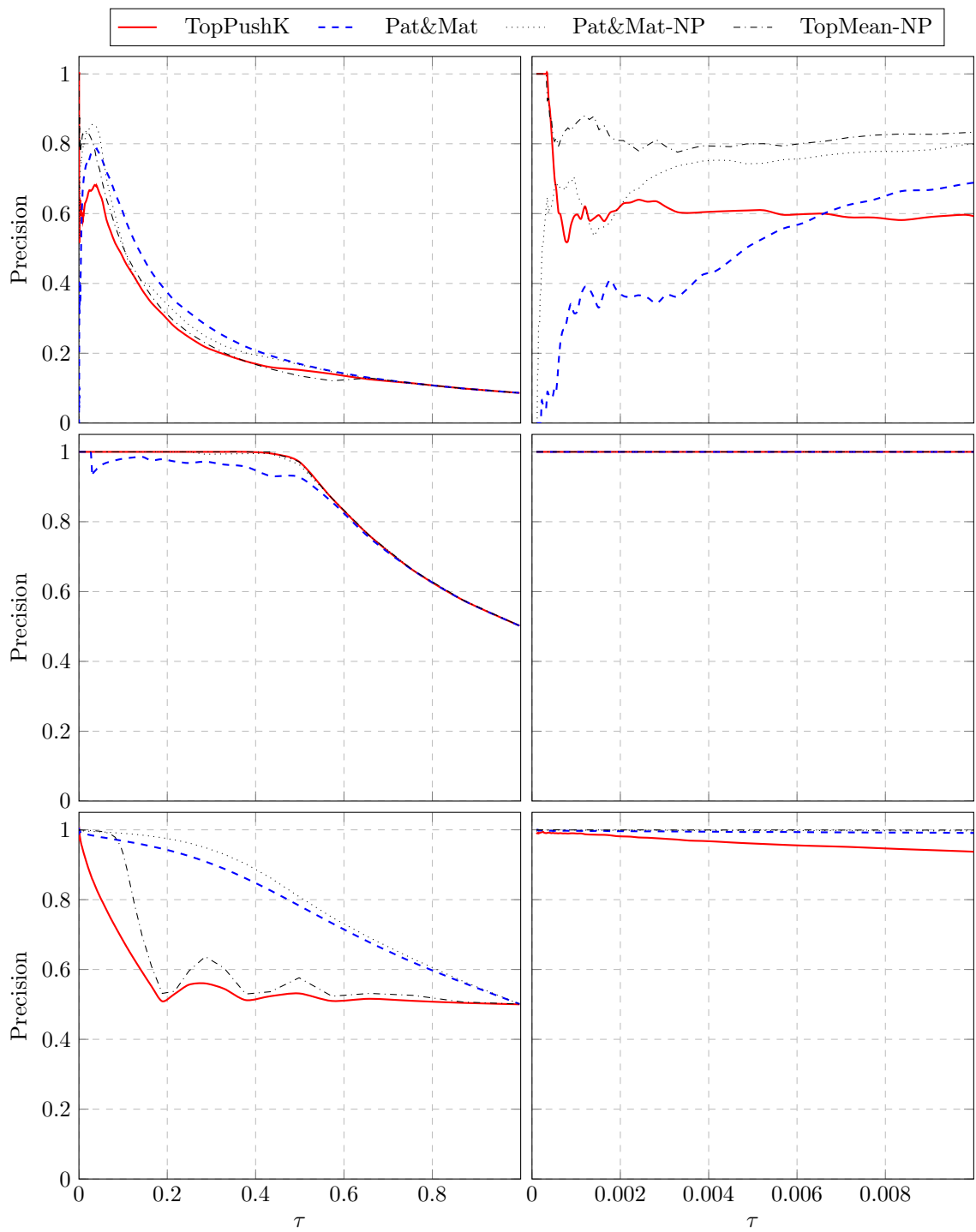

In Figure \ref{fig:ptau} we present the P$\tau$ curves which show the precision at the top $\tau$-quantile. They are equivalent to the well-known PR curves. Each row corresponds to one dataset (NetFlow top, Gisette middle and Hepmass bottom row), the left column shows the whole $\tau\in[0,1]$ while the right one is its zoomed version to $\tau\in[0,0.01]$. We show the \toppushk, \patmat, \npB and \npC methods which, as we will see later, perform the best. Note that the right column is more important as all these method focus on maximizing the accuracy only on the top of the dataset, which precisely corresponds to small values of $\tau$. The \patmat and \npB methods perform the best on larger quantiles. \toppushk performs reasonably well for large quantiles while its performance increases with decreasing $\tau$. In most cases, the precision is $1$ when $\tau$ is sufficiently small.

Having shown a good performance of the methods, we focus on their comparison. Since all methods optimize one of the criteria from Table \ref{tab:criteria}, we base the comparison on these criteria. First, we consider 14 methods (we count different values of $\tau$ as a different method) as depicted in Table \ref{tab:results2}. Note that \toppushk for $k=1$ reduces to the classical \toppush. For each dataset and each method, we evaluated criteria from Table \ref{tab:criteria}. For each dataset and criterion, we computed the rank of all methods and averaged them with respect to all datasets. Rank 1 refers to the best performance for given criteria while rank 14 is the worst. Since criteria are in columns, the comparison among methods is column-wise. The best method is depicted in dark grey and comparable methods (at most one rank away) are depicted in light grey.

\begin{table}[!ht]
\caption{The average rank of all methods across all datasets on the criteria from Table \ref{tab:criteria}. Rank 1 is the best and rank 14. This evaluation is performed for each criterion, thus in columns. Dark gray depicts the best method while light gray depicts comparable methods.}
\label{tab:results2}
\centering
\begin{tabular}{@{}llllll@{}}
\toprule
 & Positives@Top & \multicolumn{2}{c}{Positives@Quantile} & \multicolumn{2}{c}{Positives@NP} \\ \cmidrule(r){3-4} \cmidrule(l){5-6}
 & & $\tau=0.01$& $\tau=0.03$ & $\tau=0.01$ & $\tau=0.03$ \\
\midrule
\atoppush & $5.5$ & $8.1$ & $8.4$ & $8.3$ & $9.3$ \\
\atoppushk & \cellcolor{gray!60}$4.2$ & $7.2$ & $8.4$ & $7.4$ & $7.3$ \\
\agrill$\tau=0.01$ & $10.1$ & $9.7$ & $11.2$ & $11.4$ & $10.9$ \\
\phantom{\agrill}$\tau=0.03$ & $8.9$ & $10.1$ & $10.3$ & $11.1$ & $10.1$ \\
\apatmat$\tau=0.01$ & $8.1$ & $7.1$ & $6.8$ & $5.6$ & $6.1$ \\
\phantom{\apatmat}$\tau=0.03$ & $6.3$ & $6.8$ & $5.5$ & $5.9$ & $5.2$ \\
\atopmeank$\tau=0.01$ & $9.1$ & $10.3$ & $8.8$ & $9.9$ & $10.7$ \\
\phantom{\atopmeank}$\tau=0.03$ & $9.4$ & $9.1$ & $8.3$ & $9.8$ & $10.2$ \\
\anpA$\tau=0.01$ & $9.6$ & $8.4$ & $9.4$ & $10.2$ & $9.8$ \\
\phantom{\anpA}$\tau=0.03$ & $10.8$ & $8.5$ & $9.3$ & $9.3$ & $7.6$ \\
\anpB$\tau=0.01$ & $6.6$ & \cellcolor{gray!30}$5.0$ & \cellcolor{gray!60}$3.4$ & \cellcolor{gray!60}$2.8$ & $4.4$ \\
\phantom{\anpB}$\tau=0.03$ & $6.9$ & $5.4$ & $4.7$ & $4.3$ & \cellcolor{gray!60}$3.2$ \\
\anpC$\tau=0.01$ & \cellcolor{gray!30}$4.9$ & $5.3$ & $6.3$ & $5.9$ & $6.0$ \\
\phantom{\anpC}$\tau=0.03$ & \cellcolor{gray!30}$4.6$ & \cellcolor{gray!60}$4.1$ & \cellcolor{gray!30}$4.0$ & \cellcolor{gray!30}$2.9$ & \cellcolor{gray!30}$4.1$ \\
\bottomrule
\end{tabular}
\end{table}

From these tables, we make several observations:
\begin{itemize}\itemsep 0pt
\item \toppushk performs better than \toppush. We relate this to the greater stability given by considering $k$ largest negatives in \toppushk instead of $1$ in \toppush. This may have alleviated the problems with $\bm w=0$ as suggested in Theorem \ref{thm:large_t}.
\item Neither \grill nor \npA perform well. We believe that this is due to the lack of convexity as indicated in Theorem \ref{thm:convex} and the discussion thereafter.
\item \topmeank does not perform well either. Since the thresholds $\tau$ are small, then $\bm w=0$ is the global minimum as proved in Corollary \ref{cor:topmean}.
\item \toppush and \toppushk methods work better on the Positives@Top criterion. This complies with the theory as these methods were designed for this criterion.
\item \patmat, \npB and \npC methods work better on the Positives@Quantile and Positives@NP criterion. This complies with the theory as these methods were designed for one of these criteria.
\end{itemize}

In Table \ref{tab:fails} we investigate the impact of $\bm w=0$ as a potential global minimum. Each method was optimized for six different values of hyperparameters. The table depicts the condition under which the final value has a lower objective that $\bm w=0$. Thus, \cmark\ means that it is always better while \xmark\ means that the algorithm made no progress from the starting point $\bm w =0$. The latter case implies that $\bm w=0$ seems to be the global minimum. We make the following observations:
\begin{itemize}\itemsep 0pt
\item \toppushk has a lower number of successes than \npB which corresponds to Figure \ref{fig:thresholds} showing that the latter method has a lower threshold.
\item Similarly, Figure \ref{fig:thresholds} states that the methods from Section \ref{sec:obj2} has a higher threshold than their Neyman-Pearson variants from Section \ref{sec:obj3}. This is documented in the table as the latter have a higher number of successes.
\item \patmat and \npB are the only methods which succeeded at every dataset for some hyperparameter. Moreover, for each dataset, there was some $\beta_0$ such that these methods were successful if and only if $\beta\in(0,\beta_0)$. This is in agreement with Theorem \ref{thm:patmat_zero}. The only exception was Spambase which we attribute to numerical errors.
\item \topmeank fails everywhere which agrees with Corollary \ref{cor:topmean}. The only exception is the case of $\tau=3\%$ and CTA which has only $2.1\%$ positive samples. This again agrees with Corollary \ref{cor:topmean}.
\end{itemize}

\begin{table}[!ht]
\caption{Necessary hyperparameter choice for the solution to have a better objective than zero. \cmark\ means that the solution was better than zero for all hyperparameters while \xmark\ means that it was worse for all hyperparameters.}
\label{tab:fails}
\centering
\begin{tabular}{@{}lllllll@{}}
\toprule
 & Ionosphere & Spambase & Gisette & Hepmass & CTA & NetFlow \\
\midrule
\atoppush & \cmark & \xmark & \cmark & \xmark & \cmark & \xmark \\
\atoppushk & \cmark & \xmark & \cmark & \xmark & \cmark & \xmark \\
\agrill$\tau=0.01$ & \xmark & \xmark & \xmark & \xmark & \cmark & \cmark \\
\phantom{\agrill}$\tau=0.03$ & \cmark & \xmark & \xmark & \xmark & \cmark & \cmark \\
\apatmat$\tau=0.01$ & $\beta\le0.1$ & $\beta=0.001$ & $\beta\le0.001$ & $\beta\le0.1$ & $\beta\le0.1$ & $\beta\le0.1$ \\
\phantom{\apatmat}$\tau=0.03$ & $\beta\le0.1$ & $\beta=0.01$ & $\beta\le0.001$ & $\beta\le0.1$ & $\beta\le0.1$ & $\beta\le0.1$ \\
\atopmeank$\tau=0.01$ & \xmark & \xmark & \xmark & \xmark & \xmark & \xmark \\
\phantom{\atopmeank}$\tau=0.03$ & \xmark & \xmark & \xmark & \xmark & \cmark & \xmark \\
\anpA$\tau=0.01$ & \cmark & \xmark & \xmark & \xmark & \cmark & \cmark \\
\phantom{\anpA}$\tau=0.03$ & \cmark & \xmark & \xmark & \xmark & \cmark & \cmark \\
\anpB$\tau=0.01$ & $\beta\le1$ & $\beta=0.01$ & \cmark & $\beta\le0.1$ & \cmark & $\beta\le0.1$ \\
\phantom{\anpB}$\tau=0.03$ & $\beta\le1$ & $\beta=0.1$ & \cmark & $\beta\le0.1$ & \cmark & $\beta\le0.1$ \\
\anpC$\tau=0.01$ & \cmark & $\lambda=0.001$ & \cmark & \xmark & \cmark & \xmark \\
\phantom{\anpC}$\tau=0.03$ & \cmark & \cmark & \cmark & \xmark & \cmark & \xmark \\
\bottomrule
\end{tabular}
\end{table}

The final Table \ref{tab:time} depicts the time needed for one iteration in milliseconds. We show four selected representative methods. Note that the time is relatively stable and even for most of the datasets it is below one millisecond. Thus, the whole optimization for one hyperparameter (without loading and evaluation) can be usually performed within one second.

\begin{table}[!ht]
\caption{Time in miliseconds needed for one iteration.}
\label{tab:time}
\centering
\begin{tabular}{@{}lllllll@{}}
\toprule
 & Ionosphere & Spambase & Gisette & Hepmass & CTA & NetFlow \\
\midrule
\atoppushk [ms] & $0.0$ & $0.1$ & $22.3$ & $5.6$ & $0.5$ & $0.9$ \\
\agrill  [ms] & $0.0$ & $0.1$ & $41.5$ & $6.2$ & $0.5$ & $1.1$ \\
\apatmat  [ms] & $0.0$ & $0.2$ & $38.4$ & $8.4$ & $0.4$ & $1.1$ \\
\atopmeank  [ms] & $0.0$ & $0.2$ & $43.9$ & $7.1$ & $0.4$ & $0.9$ \\
\bottomrule
\end{tabular}
\end{table}

\section{Conclusion}

In this paper we achieved the following results:
\begin{itemize}\itemsep 0pt
\item We presented a unified framework for the three criteria from Section \ref{sec:framework}.
\item We showed which known methods (\atoppush, \agrill) fall into our framework and derived both completely new methods (\apatmat, \anpB, \atopmeank) and modifications of known methods (\atoppushk, \anpA, \anpC).
\item We performed a theoretical analysis of the methods. We showed that both known methods suffer from certain disadvantages. While \toppush is sensitive to outliers, \grill is non-convex.
\item We performed a numerical comparison where we showed a good and fast performance. The methods converge within seconds on datasets with 5 million training samples.
\item The extensive theoretical analysis is supported by the numerical analysis.
\end{itemize}
Based on the results, we recommend using \toppushk or \npC for extremely small $\tau$. For larger $\tau$, we recommend using \patmat or \npB as they are convex and do not suffer from problems at $\bm w=0$.

% Acknowledgements should go at the end, before appendices and references

\paragraph{Acknowledgements} This work was supported by the Grant Agency of the Czech Republic (Grant No. 18-21409S), by the Program for Guangdong Introducing Innovative and Enterpreneurial Teams (Grant No. 2017ZT07X386) and by the Shenzhen Peacock Plan (Grant No. KQTD2016112514355531).

\appendix

\section{Additional results and proofs}\label{app:proofs}

Here, we provide additional results and proofs of results mentioned in the main body. For convenience, we repeat the result statements.

\subsection{Equivalence of \eqref{eq:problem_aatp_orig}, \eqref{eq:problem_aatp_grill} and \eqref{eq:problem_aatp}}

To show this equivalence, we will start with an auxiliary lemma.

\begin{lemma}\label{lemma:fnfp_equivalence}
Denote by $t$ the exact quantile from \eqref{eq:defin_quantile}. Then for all $\alpha\in[0,1]$ we have
\begin{equation}\label{eq:fnfp_equivalence}
\fp (\bm w,t) = \alpha \fp(\bm w,t) + (1-\alpha)\fn(\bm w,t) + (1-\alpha)(n\tau - n^+) + (1-\alpha)(q - 1),
\end{equation}
where $q:= \#\{\vec x\in\mathcal X|\ \bm w^\top \bm x= t\}$.
\end{lemma}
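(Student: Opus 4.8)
The plan is to exploit the defining property of the exact quantile $t$ from \eqref{eq:defin_quantile}, namely that the number of samples with score at or above $t$ is exactly $n\tau$ once we account for ties. More precisely, let me count the samples in $\mathcal X$ by their score relative to $t$: those strictly above $t$, those exactly equal to $t$ (there are $q$ of them by definition), and those strictly below. Because $t$ is the \emph{largest} threshold with $\tp+\fp \ge n\tau$, the samples with score $\ge t$ number exactly $n\tau$ (assuming $n\tau$ is an integer, which is the ``small theoretical issue'' alluded to in the text; in general one gets the stated correction terms). I would first write $\tp(\bm w,t) + \fp(\bm w,t) = n\tau$ and also record the complementary identity $\fn(\bm w,t) + \tn(\bm w,t) = n - n\tau$, together with $\tp + \fn = n^+$ and $\fp + \tn = n^-$.

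Next I would set $\alpha = 1$ as the trivial base case — then \eqref{eq:fnfp_equivalence} reads $\fp(\bm w,t) = \fp(\bm w,t)$, which holds — and observe that both sides of \eqref{eq:fnfp_equivalence} are affine in $\alpha$. Hence it suffices to verify the identity at one more value, most conveniently $\alpha = 0$, where it becomes
\begin{equation*}
\fp(\bm w,t) = \fn(\bm w,t) + (n\tau - n^+) + (q-1).
\end{equation*}
To prove this, I would substitute $\fn(\bm w,t) = n^+ - \tp(\bm w,t)$, giving right-hand side $= n^+ - \tp(\bm w,t) + n\tau - n^+ + q - 1 = n\tau - \tp(\bm w,t) + q - 1$, and then use the quantile identity in the form $\tp(\bm w,t) + \fp(\bm w,t) = n\tau + (q-1)$. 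The last equality is where the definition of $t$ really enters: the Iverson bracket $[\bm w^\top\bm x - t \ge 0]$ counts all $q$ tied samples, so the number of samples scoring $\ge t$ is (number scoring $> t$) $+\, q$; maximality of $t$ forces (number scoring $> t$) $< n\tau \le$ (number scoring $\ge t$), and when $n\tau$ is an integer this pins the count of samples scoring $\ge t$ to be $n\tau$ if $q=1$, or more carefully yields $\tp+\fp = n\tau + (q-1)$ after the bookkeeping with the tie block. Rearranging gives exactly $n\tau - \tp(\bm w,t) = \fp(\bm w,t) - (q-1)$, matching the right-hand side above.

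Finally, having checked $\alpha=0$ and $\alpha=1$, affinity in $\alpha$ closes the argument for all $\alpha\in[0,1]$ (indeed all $\alpha\in\mathbb R$). The main obstacle is the careful handling of the tie block and the non-integer-$n\tau$ case: one has to be precise about whether maximality of $t$ gives ``exactly $n\tau$ samples at or above $t$'' or only ``$\lceil n\tau\rceil$ samples,'' and to track how the $q$ samples with score exactly $t$ are distributed between $\tp$ and $\fp$. I expect this is precisely why the statement carries the two correction terms $(1-\alpha)(n\tau - n^+)$ and $(1-\alpha)(q-1)$ and why the downstream equivalences in the main text are only asserted ``up to a small theoretical issue''; the rest of the proof is the routine affine-interpolation observation.
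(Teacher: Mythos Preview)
Your proposal is correct and essentially matches the paper's proof: both rest on the same two identities, $\tp+\fp = n\tau + q - 1$ from the quantile definition and $\tp+\fn=n^+$, and both reduce the claim to the $\alpha=0$ relation $\fp=\fn+(n\tau-n^+)+(q-1)$. The only cosmetic difference is that the paper writes the convex split $\fp=\alpha\fp+(1-\alpha)\fp$ and substitutes directly, whereas you invoke affinity in $\alpha$ and check the endpoints---these are the same argument in slightly different dress.
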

\begin{proof}
By the definition of the quantile we have
$$
\tp(\bm w,t)+\fp(\bm w,t) = n\tau + q-1.
$$
This implies
$$
\fp(\bm w,t)=n\tau +q - 1-\tp(\bm w,t) =n\tau+q-1-n^{+} +\fn(\bm w,t).
$$
From this relation we deduce
$$
\aligned
\fp(\bm w,t) &= \alpha \fp(\bm w,t) + (1-\alpha)\fp(\bm w,t) = \alpha \fp(\bm w,t) + (1-\alpha)\left(\fn(\bm w,t) + n\tau - n^+ + q - 1\right)\\
&= \alpha \fp(\bm w,t) + (1-\alpha)\fn(\bm w,t) + (1-\alpha)\left(n\tau - n^+\right) + (1-\alpha)\left(q - 1\right),
\endaligned
$$
which is precisely the lemma statement.
\end{proof}

The right-hand side of \eqref{eq:fnfp_equivalence} consists of three parts. The first one is a convex combination of false-positives and false-negatives and the second one is a constant term which has no impact on optimization. Finally, the third term $(1-\alpha)\left(q - 1\right)$ equals the number of samples for which their classifier equals the quantile. However, this term is small in comparison with the true-positives and the false-negatives and can be neglected. Moreover, when the data are ``truly'' random such as when measurement errors are present, then $q=1$ and this term vanishes completely. This gives the (almost) equivalence of \eqref{eq:problem_aatp_orig}, \eqref{eq:problem_aatp_grill} and \eqref{eq:problem_aatp}.

\subsection{Results related to convexity and continuity}

\theoremNN{thm:convex}{If the threshold $t$ is a convex function of the weights $\bm w$, then function $f(\bm w) = \fns(\bm w, t(\bm w))$ is convex.}
\begin{proof}
Due to the definition of the surrogate counts \eqref{eq:defin_counts_surr}, the objective of \eqref{eq:problem2} equals to
$$
\frac{1}{n^+}\sum_{\bm x\in\mathcal{X}^+} l\left(t(\bm w)-\bm w^\top \bm x\right).
$$
Here we write $t(\bm w)$ to stress the dependence of $t$ on $\bm w$. Since $\bm w\mapsto t(\bm w)$ is a convex function, we also have that $\bm w\mapsto t(\bm w)-\bm w^\top \bm x$ is a convex function. From its definition, the surrogate function $l$ is convex and non-decreasing. Since a composition of a convex function with a non-decreasing convex function is a convex function, this finishes the proof.
\end{proof}

\begin{lemma}\label{lemma:convex_surr}
Functions $\bm w\mapsto \bar t_{\rm Q}(\bm w)$ defined in \eqref{eq:defin_quantile_surr} and $\bm w\mapsto \bar t_{\rm NP}(\bm w)$ defined in \eqref{eq:defin_quantile_surr_np} are convex.
\end{lemma}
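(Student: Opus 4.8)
The plan is to reduce the statement to two elementary facts: pre-composition with an affine map together with nonnegative summation preserves convexity, and a closed, monotone one-dimensional sublevel set is a half-line. Introduce the auxiliary function $g_{\mathrm Q}(\bm w,t):=\frac1n\sum_{\bm x\in\Xcal}l(\beta(\bm w^\top\bm x-t))$, so that by \eqref{eq:defin_quantile_surr} the surrogate quantile is defined implicitly by $g_{\mathrm Q}(\bm w,\bar t_{\mathrm Q}(\bm w))=\tau$; the surrogate Neyman--Pearson threshold \eqref{eq:defin_quantile_surr_np} is the same object with $\Xcal^-$ and $n^-$ replacing $\Xcal$ and $n$, so it suffices to argue for $\bar t_{\mathrm Q}$. \textbf{Step 1 (joint convexity of $g_{\mathrm Q}$).} For each fixed $\bm x$ the map $(\bm w,t)\mapsto\beta(\bm w^\top\bm x-t)$ is affine and $l$ is convex, so the composition is convex in $(\bm w,t)$; a sum with the nonnegative weights $1/n$ is again convex. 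Note that monotonicity of $l$ is not needed here, precisely because the inner map is affine.

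\textbf{Step 2 (monotonicity and well-definedness in $t$).} Since $\beta>0$, the inner map $t\mapsto\beta(\bm w^\top\bm x-t)$ is decreasing, and $l$ is continuous (a finite convex function on $\mathbb R$) and non-decreasing, so each summand, and hence $g_{\mathrm Q}(\bm w,\cdot)$, is continuous and non-increasing. At any $t$ with $g_{\mathrm Q}(\bm w,t)=\tau>0$ at least one summand is strictly positive, so for the hinge loss (which is strictly increasing wherever it is positive) $g_{\mathrm Q}(\bm w,\cdot)$ is in fact strictly decreasing there; this is what makes the root $\bar t_{\mathrm Q}(\bm w)$ unique, i.e. makes $\bar t_{\mathrm Q}$ a genuine function. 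For a merely non-decreasing $l$ one reads \eqref{eq:defin_quantile_surr} with the convention $\bar t_{\mathrm Q}(\bm w):=\min\{t:g_{\mathrm Q}(\bm w,t)\le\tau\}$, and everything below goes through verbatim.

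\textbf{Step 3 (the epigraph is convex).} Let $C:=\{(\bm w,t):g_{\mathrm Q}(\bm w,t)\le\tau\}$; being a sublevel set of the convex function $g_{\mathrm Q}$, it is convex. Fix $\bm w$: by Step 2 the slice $\{t:g_{\mathrm Q}(\bm w,t)\le\tau\}$ is closed (continuity), upward closed (monotonicity), and bounded below (for the hinge loss $g_{\mathrm Q}(\bm w,t)\to\infty$ as $t\to-\infty$), hence it is a half-line $[\underline t,\infty)$ whose left endpoint is the root $\bar t_{\mathrm Q}(\bm w)$. Therefore $C=\{(\bm w,s):s\ge\bar t_{\mathrm Q}(\bm w)\}=\mathrm{epi}(\bar t_{\mathrm Q})$, and a function with convex epigraph is convex. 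Equivalently, without epigraphs: for $\bm w_1,\bm w_2$ and $\lambda\in[0,1]$, applying the joint convexity of Step 1 to the convex combination of the points $(\bm w_i,\bar t_{\mathrm Q}(\bm w_i))$ gives $g_{\mathrm Q}\le\tau$ at that point, and then monotonicity together with uniqueness of the root (or the $\min$ convention) forces $\bar t_{\mathrm Q}(\lambda\bm w_1+(1-\lambda)\bm w_2)\le\lambda\bar t_{\mathrm Q}(\bm w_1)+(1-\lambda)\bar t_{\mathrm Q}(\bm w_2)$. The identical argument with $\Xcal^-$ and $n^-$ handles $\bar t_{\mathrm NP}$.

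\textbf{Expected main obstacle.} The argument above is otherwise routine, so the only delicate point is the well-definedness of the implicit threshold: a flat surrogate $l$ can make $g_{\mathrm Q}(\bm w,\cdot)$ locally constant at the value $\tau$, so the phrase ``$\bar t_{\mathrm Q}(\bm w)$ solves \eqref{eq:defin_quantile_surr}'' must be made precise (via the $\min$ convention, or by invoking that the hinge loss actually used is strictly increasing on its support and $\tau>0$). Once the selection of the root is pinned down, the convexity follows from Steps 1--3 with no further work.
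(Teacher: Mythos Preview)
Your proof is correct and follows essentially the same route as the paper: establish joint convexity of the auxiliary function $g$, use monotonicity in $t$, and conclude via the inequality $g(\lambda\bm w_1+(1-\lambda)\bm w_2,\lambda t_1+(1-\lambda)t_2)\le 0$ that the implicit threshold is convex---your ``equivalently, without epigraphs'' paragraph in Step~3 is the paper's argument verbatim. Your additional care about well-definedness (the $\min$ convention when $l$ has a flat region) and the alternative epigraph formulation are refinements the paper omits, but the core mechanism is identical.
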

\begin{proof}
We will show this result only for the function $\bm w\mapsto \bar t_{\rm Q}(\bm w)$. For the second function, it can be shown in a identical way. This former function is defined via the implicit equation
$$
g(\bm w,t):=\frac{1}{n}\sum_{\bm x\in\mathcal{X}} l\left(\bm w^\top \bm x-t\right) - \tau = 0.
$$
Since $l$ is convex, we immediately obtain that $g$ is jointly convex in both variables.

To show the convexity, consider $\bm w_1$, $\bm w_2$ and the corresponding $t_1=\bar t_{\rm Q}(\bm w_1)$, $t_2=\bar t_{\rm Q}(\bm w_2)$. Note that this implies $g(\bm w_1,t_1)=g(\bm w_2,t_2)=0$. Then for any $\lambda\in[0,1]$ we have 
\begin{equation}\label{eq:proof_conv1}
g(\lambda\vec w_{1}+(1-\lambda)\vec w_{2},\;\lambda t_{1}+(1-\lambda)t_{2}) \le\lambda g(\vec w_{1},t_{1})+(1-\lambda)g(\vec w_{2},t_{2})=0,
\end{equation}
where the inequality follows from the convexity of $g$ and the equality
from $g(\vec w_{1},t_{1})=g(\vec w_{2},t_{2})=0.$
From the definition of the surrogate quantile we have
\begin{equation}\label{eq:proof_conv2}
g(\lambda\vec w_{1}+(1-\lambda)\vec w_{2},\bar{t}_{\rm Q}(\lambda\vec w_{1}+(1-\lambda)\vec w_{2}))=0.
\end{equation}
Since $g$ is non-increasing in the second variable, from \eqref{eq:proof_conv1} and \eqref{eq:proof_conv2} we deduce
$$
\bar{t}_{\rm Q}(\lambda\vec w_{1}+(1-\lambda)\vec w_{2})\le\lambda t_{1}+(1-\lambda)t_{2} =\lambda\bar{t}_{\rm Q}(\vec w_{1})+(1-\lambda)\bar{t}_{\rm Q}(\vec w_{2}),
$$
which implies that function $\vec w\mapsto \bar t_{\rm Q}(\bm w)$ is convex.
\end{proof}

\begin{lemma}\label{lemma:quantile_continuous}
Functions $\bm w\mapsto t_{\rm Q}(\bm w)$ defined in \eqref{eq:defin_quantile} and $\bm w\mapsto t_{\rm NP}(\bm w)$ defined in \eqref{eq:defin_quantile_np} are Lipschitz continuous.
\end{lemma}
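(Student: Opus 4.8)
The plan is to identify each threshold with an order statistic of the vector of scores, and then combine the Lipschitz continuity of linear functionals with the elementary fact that order statistics are $1$-Lipschitz in the supremum norm.

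First I would rewrite the defining conditions. Fix $\bm w$ and let $z_{[1]}\ge z_{[2]}\ge\dots\ge z_{[n]}$ denote the scores $\{\bm w^\top\bm x\mid\bm x\in\Xcal\}$ sorted in non-increasing order (repetitions allowed). Since $\tp(\bm w,t)+\fp(\bm w,t)=\#\{\bm x\in\Xcal\mid\bm w^\top\bm x\ge t\}$ is integer-valued, the constraint in \eqref{eq:defin_quantile} is equivalent to this count being at least $k:=\lceil n\tau\rceil$, an integer independent of $\bm w$, and $\#\{\bm x\mid\bm w^\top\bm x\ge t\}\ge k$ holds precisely when $t\le z_{[k]}$. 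Hence the maximum in \eqref{eq:defin_quantile} is attained and $t_{\rm Q}(\bm w)=z_{[k]}$, i.e. $t_{\rm Q}$ is exactly the $k$-th order statistic of the tuple $(\bm w^\top\bm x)_{\bm x\in\Xcal}$. The same reasoning applied to $\Xcal^-$ shows that $t_{\rm NP}(\bm w)$ from \eqref{eq:defin_quantile_np} is the $k^-$-th order statistic of $(\bm w^\top\bm x)_{\bm x\in\Xcal^-}$ with $k^-:=\lceil n^-\tau\rceil$.

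Next I would assemble two elementary Lipschitz estimates. The evaluation map $\bm w\mapsto(\bm w^\top\bm x)_{\bm x\in\Xcal}$, viewed as a map into $\mathbb{R}^n$ equipped with $\norm{\cdot}_\infty$, satisfies $\max_{\bm x\in\Xcal}\nrm{(\bm w_1-\bm w_2)^\top\bm x}\le(\max_{\bm x\in\Xcal}\norm{\bm x})\norm{\bm w_1-\bm w_2}$ by the Cauchy--Schwarz inequality, hence it is $L$-Lipschitz with $L:=\max_{\bm x\in\Xcal}\norm{\bm x}$. Moreover, for any fixed $j$ the $j$-th order statistic $\mathbb{R}^n\to\mathbb{R}$ is $1$-Lipschitz for $\norm{\cdot}_\infty$: if $\nrm{a_i-b_i}\le\varepsilon$ for all $i$ then $a_i\le b_i+\varepsilon$, so the $j$-th largest entry of $(a_i)$ is at most $\varepsilon$ plus the $j$-th largest entry of $(b_i)$, and the reverse bound follows by symmetry. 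Composing the two estimates with the first step gives that $t_{\rm Q}$ is $L$-Lipschitz and $t_{\rm NP}$ is Lipschitz with constant $\max_{\bm x\in\Xcal^-}\norm{\bm x}$, which is the claim.

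The only delicate point is the first step: verifying that the ``$\max$ over $t$'' in \eqref{eq:defin_quantile} and \eqref{eq:defin_quantile_np} really coincides with an order statistic even when $n\tau$ is non-integer or scores are tied. This is settled by noting that $t\mapsto\#\{\bm x\mid\bm w^\top\bm x\ge t\}$ is a non-increasing step function whose jumps occur exactly at the scores, so its super-level set $\{t\mid\text{count}\ge k\}$ is the closed half-line $(-\infty,z_{[k]}]$; the argument is then routine.
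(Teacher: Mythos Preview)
Your proof is correct and follows essentially the same idea as the paper's: both recognise that $t_{\rm Q}(\bm w)$ (resp.\ $t_{\rm NP}(\bm w)$) always coincides with one of the scores $\bm w^\top\bm x$, so that the map is a selection from finitely many linear functionals. The paper condenses this into a single line (``equals one of the scores, hence piecewise linear, hence Lipschitz''), whereas you make the argument fully explicit by identifying the threshold with the $\lceil n\tau\rceil$-th order statistic and composing the $1$-Lipschitz property of order statistics with the Lipschitz bound on the score map; your version is more detailed and yields an explicit constant, but the underlying mechanism is the same.
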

\begin{proof}
The quantile $t_{\rm Q}(\bm w)$ equals to one or more scores $z=\bm w^\top \bm x$. This implies that function $\bm w\mapsto t_{\rm Q}(\bm w)$ is piecewise linear which implies that it is Lipschitz continuous.
\end{proof}

\subsection{Results related to the global minima at zero}

\theoremNN{thm:large_t}{
Consider any of these methods: \toppush, \toppushk, \topmeank or \npC. Fix any $\bm w$ and denote the corresponding threshold $t$. If we have
\begin{equation}\label{eq:w_zero_nn}
t\ge \frac{1}{n^+} \sum_{\bm x^+\in\Xcal^+} \bm w^\top \bm x^+,
\end{equation}
then $f(\bm 0)\le f(\bm w)$.
}
\begin{proof}
First note that due to $l(0)=1$ and the convexity of $l$ we have $l(z)\ge 1+cz$, where $c$ equals to the derivative of $l$ at $0$. Then we have
$$
\aligned
f(\bm w) &\ge \frac{1}{n^+}\fns(\bm w, t) = \frac{1}{n^+}\sum_{\bm x\in\Xcal^+}l(t-\bm w^\top \bm x) \ge \frac{1}{n^+}\sum_{\bm x\in\Xcal^+}(1+c(t-\bm w^\top \bm x)) \\
&= 1+\frac{c}{n^+}\sum_{\bm x\in\Xcal^+}(t-\bm w^\top \bm x) = 1+ct - \frac{c}{n^+}\sum_{\bm x\in\Xcal^+}\bm w^\top \bm x \ge 1,
\endaligned
$$
where the last inequality follows from \eqref{eq:w_zero_nn}. Now we realize that for any method from the statement, the corresponding threshold for $\bm w=0$ equals to $t=0$, and thus $f(\bm 0)=1$. But then $f(\bm 0)\le f(\bm w)$, which finishes the proof.
\end{proof}

\theoremNN{thm:patmat_zero}{
Consider the \patmat or \npB method with the hinge surrogate and no regularization. Assume that for some $\bm w$ we have
\begin{equation}\label{eq:patmat_zero_nn}
\frac{1}{n^+}\sum_{\bm x^+\in \Xcal^+}\bm w^\top \bm x^+ > \frac{1}{n^-}\sum_{\bm x^-\in \Xcal^-}\bm w^\top \bm x^-.
\end{equation}
Then there exists some $\beta>0$ such that $f(\bm w)<f(\bm 0)$.
}
\begin{proof}
Define
$$
\aligned
z_{\rm min} &= \min_{\bm x\in \Xcal}\bm w^\top \bm x, \\
\bar z &= \frac{1}{n}\sum_{\bm x\in \Xcal}\bm w^\top \bm x ,\\
z_{\rm max} &= \max_{\bm x\in \Xcal}\bm w^\top \bm x.
\endaligned
$$
Then we have the following chain of relations
\begin{equation}\label{eq:patmat_zero_aux0}
\aligned
\bar z &= \frac{1}{n}\sum_{\bm x\in\Xcal}\bm w^\top \bm x = \frac{1}{n}\sum_{\bm x\in\Xcal^+}\bm w^\top \bm x + \frac{1}{n}\sum_{\bm x\in\Xcal^-}\bm w^\top \bm x < \frac{1}{n}\sum_{\bm x\in\Xcal^+}\bm w^\top \bm x + \frac{n^-}{nn^+}\sum_{\bm x\in\Xcal^+}\bm w^\top \bm x \\
&= \frac 1n\left(1+\frac{n^-}{n^+}\right)\sum_{\bm x\in\Xcal^+}\bm w^\top \bm x = \frac 1n\frac{n^++n^-}{n^+}\sum_{\bm x\in\Xcal^+}\bm w^\top \bm x = \frac 1{n^+}\sum_{\bm x\in\Xcal^+}\bm w^\top \bm x.
\endaligned
\end{equation}
The only inequality follows from \eqref{eq:patmat_zero_nn} and the last equality follows from $n^++n^-=n$.

Due to \eqref{eq:patmat_zero_nn} we observe $z_{\rm min}<\bar z<z_{\rm max}$. Then we can define
$$
\aligned
\beta &= \min\left\{\frac{\tau}{\bar z-z_{\rm min}}, \frac{1-\tau}{z_{\rm max}-\bar z}\right\}, \\
t &= \frac{1-\tau}{\beta} + \bar z.
\endaligned
$$
We note that $\beta>0$. At the same time we obtain
\begin{equation}\label{eq:patmat_zero_aux1}
1+\beta(\bm w^\top\bm x-t) \ge 1+\beta(z_{\rm min}-t) = 1+\beta z_{\rm min}-1+\tau - \beta\bar z = \beta (z_{\rm min}-\bar z)+\tau\ge 0.
\end{equation}
Here, the first equality follows from the definition of $t$ and the last inequality from the definition of $\beta$. Then we have
$$
\aligned
\frac{1}{n}\sum_{\bm x\in\Xcal}l(\bm w^\top\bm x-t) &= \frac{1}{n}\sum_{\bm x\in\Xcal}\max\{1+\beta(\bm w^\top\bm x-t), 0\} = \frac{1}{n}\sum_{\bm x\in\Xcal}\left(1+\beta(\bm w^\top\bm x-t)\right) \\
&= 1-\beta t + \frac{\beta}{n}\sum_{\bm x\in\Xcal}\bm w^\top\bm x = 1-\beta t + \beta\bar z = \tau,
\endaligned
$$
where the second equality employs \eqref{eq:patmat_zero_aux1}, the third one the definition of $\bar z$ and the last one the definition of $t$. But this means that $t$ is the threshold corresponding to $\bm w$.

Similarly to \eqref{eq:patmat_zero_aux1} we get
$$
1+t-\bm w^\top \bm x \ge 1+t-z_{\rm max} = 1+\frac{1-\tau}{\beta} + \bar z - z_{\rm max} \ge \frac{1-\tau}{\beta} + \bar z - z_{\rm max} \ge0,
$$
where the last inequality follows from the definition of $\beta$. Then for the objective we have
\begin{equation}\label{eq:patmat_zero_aux2}
\aligned
f(\bm w) &= \frac{1}{n^+}\sum_{\bm x\in\Xcal^+}l(t-\bm w^\top \bm x) = \frac{1}{n^+}\sum_{\bm x\in\Xcal^+}\max\{1+t-\bm w^\top \bm x,0\} = \\
&= \frac{1}{n^+}\sum_{\bm x\in\Xcal^+}\left(1+t-\bm w^\top \bm x\right) =1+t-\frac{1}{n^+}\sum_{\bm x\in\Xcal^+}\bm w^\top \bm x < 1+t-\bar z \\
& = 1+\frac{1-\tau}{\beta} + \bar z -\bar z = 1+\frac{1-\tau}{\beta} = f(\bm 0),\\
\endaligned
\end{equation}
where we used \eqref{eq:patmat_zero_aux2} and \eqref{eq:patmat_zero_aux0} and the results for $\bm w=0$ from Section \ref{sec:example}. Thus, we finished the proof for \patmat. The proof for \npB can be performed in a identical way by replacing in the definition of $\bar z$ the mean with respect to all samples by the mean with respect to all negative samples.
\end{proof}

\subsection{Results related to threshold comparison}

\begin{lemma}\label{lemma:bound}
Denote the scores $z^+=\bm w^\top \bm x^+$ for $\bm x^+\in\Xcal^+$ and $z^-=\bm w^\top \bm x^-$ for $\bm x^-\in\Xcal^-$ and the ordered variants with decreasing components of $\bm z^-$ by $\bm z_{[\cdot]}^-$. Then we have the following implications
$$
\aligned
z_{[1]}^- \ge \frac{1}{n^+}\sum_{i=1}^{n^+} z_{i}^+ &\implies f(\bm 0)\le f(\bm w)\text{ for }\toppush, \\
\frac{1}{k}\sum_{i=1}^kz_{[i]}^- \ge \frac{1}{n^+}\sum_{i=1}^{n^+} z_{i}^+ &\implies f(\bm 0)\le f(\bm w)\text{ for }\toppushk, \\
\frac{1}{n^-\tau}\sum_{i=1}^{n^-\tau}z_{[i]}^- \ge \frac{1}{n^+}\sum_{i=1}^{n^+} z_{i}^+ &\implies f(\bm 0)\le f(\bm w)\text{ for }\npC. \\
\endaligned
$$
\end{lemma}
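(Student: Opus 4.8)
The plan is to observe that each of the three implications is just Theorem~\ref{thm:large_t} specialized to the corresponding method once its threshold is written out explicitly. First I would record that, by the very definition of the scores, $\frac{1}{n^+}\sum_{i=1}^{n^+} z_i^+ = \frac{1}{n^+}\sum_{\bm x^+\in\Xcal^+}\bm w^\top \bm x^+$, which is exactly the right-hand side of \eqref{eq:w_zero_nn}. So for each method it suffices to check that the quantity on the left of its implication's hypothesis is precisely its threshold $t(\bm w)$.

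Next I would read off the threshold of each method from its defining problem. For \toppush the constraint in \eqref{eq:problem_toppush} gives $t(\bm w)=z_{[1]}^-$; for \toppushk the constraint in \eqref{eq:problem_toppushk} gives $t(\bm w)=\frac1k\sum_{i=1}^k z_{[i]}^-$; and for \npC the constraint in \eqref{eq:problem_topmeank_np} gives $t(\bm w)=\frac{1}{n^-\tau}\sum_{i=1}^{n^-\tau} z_{[i]}^-$. Since the lemma uses the same decreasing ordering $\bm z_{[\cdot]}^-$ of the negative scores $z^-=\bm w^\top \bm x^-$ as these definitions, in each case the hypothesis of the implication is literally the statement $t(\bm w)\ge \frac{1}{n^+}\sum_{\bm x^+\in\Xcal^+}\bm w^\top \bm x^+$, i.e.\ condition \eqref{eq:w_zero_nn}.

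Finally, since \toppush, \toppushk and \npC are all among the methods covered by Theorem~\ref{thm:large_t}, I would invoke that theorem for the fixed $\bm w$ to conclude $f(\bm 0)\le f(\bm w)$ in each of the three cases, which is what the lemma asserts. There is essentially no obstacle here: the only point requiring any care is the bookkeeping that the ordered-score notation $\bm z_{[\cdot]}^-$ of the lemma statement coincides with the ordering used in the threshold definitions of Section~\ref{sec:framework}, and that the averaging set in \eqref{eq:w_zero_nn} (all positives) matches the right-hand sides written in the lemma.
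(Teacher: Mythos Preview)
Your proposal is correct and mirrors the paper's own argument exactly: recognize that the left-hand side of each hypothesis is precisely the threshold $t(\bm w)$ for the respective method, then apply Theorem~\ref{thm:large_t}. The paper's proof is a one-liner to this effect, so your more detailed bookkeeping is simply an expanded version of the same idea.
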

\begin{proof}
Since the left-hand side in the equality is the threshold for the corresponding methods, the result follows immediately from Theorem \ref{thm:large_t}.
\end{proof}

\begin{lemma}\label{lemma:thresholds1}
The threshold for the \patmat method is greater or equal than the threshold for the \topmeank method. Similarly, the threshold for the \npB method is greater or equal than the threshold for the \npC method. 
\end{lemma}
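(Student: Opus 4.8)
The plan is to read the inequality directly off the implicit equation defining the \patmat threshold, using only the nonnegativity of $l$ and the tangent-line bound for the convex surrogate. Fix $\bm w$, let $z_{[1]}\ge\dots\ge z_{[n]}$ be the sorted scores $\bm w^\top\bm x$ over $\bm x\in\Xcal$, write $k=n\tau$, and let $\bar t_{\rm Q}=\bar t_{\rm Q}(\bm w)$ be the \patmat threshold, i.e. the solution of \eqref{eq:defin_quantile_surr}, which after multiplying by $n$ reads $\sum_{i=1}^{n} l\bigl(\beta(z_{[i]}-\bar t_{\rm Q})\bigr)=n\tau=k$. Recall (exactly as in the proof of Theorem~\ref{thm:large_t}) that convexity together with $l(0)=1$ gives $l(u)\ge 1+cu$ for $c$ a subgradient of $l$ at $0$ (the derivative of $l$ at $0$ when it exists), and that $c>0$ whenever the surrogate quantile is well defined for $\tau<1$. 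The \topmeank threshold from \eqref{eq:problem_topmeank} is $t_{\rm TM}=\frac1k\sum_{i=1}^{k}z_{[i]}$, so the goal is $\bar t_{\rm Q}\ge t_{\rm TM}$.

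The core of the argument is two short steps. First, discard from the identity $\sum_{i=1}^n l\bigl(\beta(z_{[i]}-\bar t_{\rm Q})\bigr)=k$ the $n-k$ summands with $i>k$; each of these is nonnegative, so $k\ge\sum_{i=1}^{k} l\bigl(\beta(z_{[i]}-\bar t_{\rm Q})\bigr)$. Second, apply $l(u)\ge 1+cu$ to the $k$ remaining terms: $k\ge\sum_{i=1}^{k}\bigl(1+c\beta(z_{[i]}-\bar t_{\rm Q})\bigr)=k+c\beta\bigl(\sum_{i=1}^{k}z_{[i]}-k\bar t_{\rm Q}\bigr)$. Cancelling $k$ and dividing by $c\beta>0$ gives $\sum_{i=1}^{k}z_{[i]}\le k\bar t_{\rm Q}$, that is, $\bar t_{\rm Q}\ge\frac1k\sum_{i=1}^k z_{[i]}=t_{\rm TM}$, which is the first assertion.

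For the \npB versus \npC comparison I would repeat this verbatim, restricting everything to negative samples: with $z_{[1]}^-\ge\dots\ge z_{[n^-]}^-$ the sorted scores over $\Xcal^-$, the surrogate Neyman-Pearson threshold $\bar t_{\rm NP}(\bm w)$ from \eqref{eq:defin_quantile_surr_np} satisfies $\sum_{i=1}^{n^-} l\bigl(\beta(z_{[i]}^--\bar t_{\rm NP})\bigr)=n^-\tau$; dropping the smallest $n^--n^-\tau$ of these nonnegative terms and bounding the top $n^-\tau$ below by $1+c\beta(\cdot)$ gives $\bar t_{\rm NP}(\bm w)\ge\frac1{n^-\tau}\sum_{i=1}^{n^-\tau}z_{[i]}^-$, and the right-hand side is precisely the \npC threshold from \eqref{eq:problem_topmeank_np}.

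The argument is essentially immediate, so there is no real obstacle; the only points that need a word of care are (i) that $c>0$ — equivalently, that the surrogate-quantile equation is feasible, which forces $l$ to increase to the left of $0$ and hence $\partial_+l(0)>0$ — so that dividing by $c\beta$ preserves the direction of the inequality, and (ii) if $n\tau$ (resp. $n^-\tau$) is not an integer, fixing the convention used in \eqref{eq:problem_topmeank}--\eqref{eq:problem_topmeank_np} for ``the $n\tau$ largest scores''; the term-dropping step adapts to whichever convention is adopted.
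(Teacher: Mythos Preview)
Your proof is correct and is essentially the same as the paper's: drop the $n-k$ nonnegative terms from the defining equation, apply the tangent-line lower bound $l(u)\ge 1+cu$ to the remaining top-$k$ terms, and rearrange. You are in fact slightly more careful than the paper, which silently sets $\beta=1$ in its computation and does not comment on $c>0$ or on the integrality of $n\tau$.
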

\begin{proof}
Define the scores $z=\bm w^\top \bm x$ and define $I$ to be the set of $n\tau$ indices where the scores have the largest values. Due to $l(0)=1$ and the convexity of $l$ we have $l(z)\ge 1+cz$, where $c$ is the derivative of $l$ at $0$. From the non-negativity of $l$ we have
$$
n\tau = \sum_{i=1}^nl(z_i - t) \ge \sum_{i\in I}l(z_i - t) \ge \sum_{i\in I}(1+c(z_i - t)) = n\tau - n\tau ct + c\sum_{i\in I}z_i,
$$
which implies
$$
t\ge \frac{1}{n\tau}\sum_{i\in I}z_i.
$$
But this finishes the proof for the \patmat method. The same result can be shown for the \npB method by considering only indices corresponding to negative samples.
\end{proof}

\begin{lemma}\label{lemma:thresholds2}
Define vector $\bm z^+$ with components $z^+=\bm w^\top \bm x^+$ for $\bm x^+\in\Xcal^+$ and similarly define vector $\bm z^-$ with components $z^-=\bm w^\top \bm x^-$ for $\bm x^-\in\Xcal^-$. Denote by $\bm z_{[\cdot]}^+$ and $\bm z_{[\cdot]}^-$ the sorted versions of $\bm z^+$ and $\bm z^-$, respectively. Then we have the following statements:
$$
\aligned
z_{[n^+\tau]}^+ > z_{[n^-\tau]}^- &\implies \grill\text{ has larger threshold than }\npA, \\
\frac{1}{n^+\tau}\sum_{i=1}^{n^+\tau} z_{[i]}^+ > \frac{1}{n^-\tau}\sum_{i=1}^{n^-\tau} z_{[i]}^-  &\implies \topmeank\text{ has larger threshold than }\npC. \\
\endaligned
$$
\end{lemma}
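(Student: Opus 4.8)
The plan is to write each of the four thresholds in closed form---as an order statistic for \grill and \npA, and as an average of the largest order statistics for \topmeank and \npC---and then prove each implication by a short counting, respectively summation, argument. By the maximality in \eqref{eq:defin_quantile}, the \grill threshold $t_{\rm Q}(\bm w)$ is the largest $t$ for which at least $n\tau$ of the scores $z=\bm w^\top\bm x$, $\bm x\in\Xcal$, are $\ge t$; hence it equals the $(n\tau)$-th largest entry $z_{[n\tau]}$ of the full sorted score vector. In the same way, by \eqref{eq:defin_quantile_np} the \npA threshold equals $z_{[n^-\tau]}^-$, while by \eqref{eq:problem_topmeank} and \eqref{eq:problem_topmeank_np} the \topmeank and \npC thresholds are $\frac{1}{n\tau}\sum_{i=1}^{n\tau}z_{[i]}$ and $\frac{1}{n^-\tau}\sum_{i=1}^{n^-\tau}z_{[i]}^-$, respectively.

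For the first implication, put $t'=z_{[n^-\tau]}^-$, which is the \npA threshold. By definition of $\bm z_{[\cdot]}^-$ at least $n^-\tau$ negative samples have score $\ge t'$, so $\fp(\bm w,t')\ge n^-\tau$. The hypothesis $z_{[n^+\tau]}^+>z_{[n^-\tau]}^-=t'$ means the $(n^+\tau)$-th largest positive score is strictly above $t'$, so at least $n^+\tau$ positive samples have score $\ge t'$ and $\tp(\bm w,t')\ge n^+\tau$. Adding these and using $n^++n^-=n$ gives $\tp(\bm w,t')+\fp(\bm w,t')\ge n\tau$, so $t'$ is a feasible threshold in \eqref{eq:defin_quantile}; by maximality $t_{\rm Q}(\bm w)\ge t'=t_{\rm NP}(\bm w)$, i.e.\ \grill has the (at least as) large threshold.

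For the second implication, the key observation is that $\sum_{i=1}^{n\tau}z_{[i]}$ is the maximum of $\sum_{j\in J}z_j$ over all index sets $J$ of size $n\tau$. Taking $J$ to be the union of the indices of the $n^+\tau$ largest positive scores and of the $n^-\tau$ largest negative scores---a disjoint union of size $n^+\tau+n^-\tau=n\tau$---gives, with $A=\frac{1}{n^+\tau}\sum_{i=1}^{n^+\tau}z_{[i]}^+$ and $B=\frac{1}{n^-\tau}\sum_{i=1}^{n^-\tau}z_{[i]}^-$, the chain $\sum_{i=1}^{n\tau}z_{[i]}\ge\sum_{i=1}^{n^+\tau}z_{[i]}^+ + \sum_{i=1}^{n^-\tau}z_{[i]}^- = n^+\tau\,A+n^-\tau\,B \ge n^+\tau\,B+n^-\tau\,B = n\tau\,B$, where the last inequality uses the hypothesis $A>B$. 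Dividing by $n\tau$ shows the \topmeank threshold $\frac{1}{n\tau}\sum_{i=1}^{n\tau}z_{[i]}$ is $\ge B$, the \npC threshold, as claimed.

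I expect the two counting/summation steps to be routine; the only points needing care are the identification of $t_{\rm Q}$ and $t_{\rm NP}$ with the correct order statistics (which relies on the tacit convention that $n\tau$, $n^+\tau$ and $n^-\tau$ are integers and on the tie-breaking built into \eqref{eq:defin_quantile}), and, in the second implication, the fact that the two ``top'' index sets are disjoint so that their union is a genuine size-$n\tau$ subset to which the sum-of-largest bound applies. Strictness---matching the word ``larger''---is immediate in the second implication from $A>B$ and $n^+\tau\ge1$; in the first implication one only obtains the non-strict $t_{\rm Q}\ge t_{\rm NP}$, which is what the threshold comparison in Section~\ref{sec:theory} requires.
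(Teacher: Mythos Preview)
Your proof is correct and follows essentially the same approach as the paper. The paper compresses the first implication into the single inequality $z_{[n\tau]}\ge\min\{z_{[n^+\tau]}^+,z_{[n^-\tau]}^-\}$, which is precisely the disjoint-union counting observation you spell out via $\tp(\bm w,t')+\fp(\bm w,t')\ge n\tau$, and then dismisses the second implication as ``similar''; your explicit subset argument for the sum-of-largest bound is exactly how one would fill in that gap.
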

\begin{proof}
Since $\bm z^+$ and $\bm z^-$ are computed on disjunctive indices, we have
$$
z_{[n\tau]} \ge \min\{z_{[n^+\tau]}^+, z_{[n^-\tau]}^-\}.
$$
Since $z_{[n\tau]}$ is the threshold for the \grill method and $z_{[n^-\tau]}^-$ is the threshold for the \npA method, the first statement follows. The second part can be shown in a similar way.
\end{proof}

\noindent Since the goal of the presented methods is to push $z^+$ above $z^-$, we may expect that the conditions in Lemma \ref{lemma:thresholds2} hold true.

\section{Computation for Section \ref{sec:example}}\label{app:example}

Here, we derive the results presented in Section \ref{sec:example} more properly. We recall that we have $n$ negative samples randomly distributed in $[-1,0]\times[-1,1]$, $n$ positive samples randomly distributed in $[0,1]\times[-1,1]$ and one negative sample at $(2,0)$. We assume that $n$ is large and the outlier may be ignored for the computation of thresholds which require a large number of points.

For $\bm w_1=(0,0)$, the computation is rather clear. The threshold is obviously zero for all methods with the exception of \patmat, where we compute
$$
\tau = \frac{1}{n}\sum_{\bm x\in\Xcal}l(\beta(\bm w^\top \bm x-t)) = l(0-\beta t) = 1-\beta t,
$$
which implies $t=\frac{1}{\beta}(1-\tau)$. Moreover, for $t\ge 0$ we obtain 
$$
\frac{1}{n^+}\fn(\bm w_1,t) = \frac{1}{n^+}\sum_{\bm x\in \Xcal^+} l(t-0) = l(t) = 1+t.
$$
This finishes the computation for $\bm w_1$.

For $\bm w_2=(1,0)$ the computation goes similar. Then $\bm w^\top \bm x^+$ has the uniform distribution on $[0,1]$ while $\bm w^\top \bm x$ has the uniform distribution on $[-1,1]$. The computation of thresholds was explained in Section \ref{sec:example} with the exception of \patmat where for $\beta$ small enough we have
\begin{equation}\label{eq:example1}
\aligned
\tau &= \frac{1}{n}\sum_{\bm x\in\Xcal}l(\bm w_2^\top \bm x-t) \approx \int_{-1}^1 l(z-t)dz = \int_{-1}^1 \max\{0,1+\beta(z-t)\}dz \\
&=\int_{-1}^1 \left(1+\beta(z-t)\right)dz = 1-\beta t + \beta\int_{-1}^1zdz = 1-\beta t,
\endaligned
\end{equation}
and thus again $t=\frac{1}{\beta}(1-\tau)$. Note that
$$
1+\beta(z-t) \ge 1+\beta(-1-t) = 1-\beta - 1+\tau = -\beta+\tau
$$
and if $\beta\le\tau$, then we may indeed ignore the $\max$ operator in \eqref{eq:example1}. For the objective, for $t\ge 0$ we have
$$
\frac{1}{n^+}\fn(\bm w_2,t) \approx \int_0^1 l(t-z)dz = \int_0^1 \left(1+t-z\right)dz = 0.5+t.
$$

\section{Computation of Derivatives}\label{app:derivatives}

To apply the gradient descent method, we need to compute the derivatives. The general scheme was provided in \eqref{eq:derivatives}. It remains to compute only $\nabla t(\bm w)$, which we show here for \toppushk and \patmat. For other methods, it can be performed in a similar manner. 

For \toppushk, we recall that the threshold $t$ equals to the mean of $k$ largest scores $\bm w^\top \bm x$ for all negative samples $\bm x\in\Xcal^-$. The locally, this is a linear function for which it is simple to compute derivatives. To prevent unnecessary computations for \eqref{eq:derivatives}, we summarize the computations in Algorithm \ref{alg:derivatives1}. For simplicity, we denote by $I^+$ and $I^-$ the indices of positive and negative samples, respectively.

\begin{algorithm}[!ht]
\begin{algorithmic}[1]
\State Compute the scores $z_i = \bm w^\top \bm x_i$ for all indices $i\in I^+\cup I^-$
\State Find $k$ indices $l_1,\dots,l_k$ where $\bm z$ has the greatest values on $I^-$
\State Compute the threshold $t = \frac{1}{k}\sum_{j=1}^k z_{l_j}$
\State Compute the threshold derivative $\nabla t = \frac{1}{k}\sum_{j=1}^k \bm x_{l_j}$
%\State Compute the objective as $\frac{1}{n_+}\sum_{i\in I_+}l\left( t - z_i\right) + \frac\lambda2\norm{\bm w}^2$
\State Compute the derivative as $\frac{1}{n_+}\sum_{i\in I_+}l'\left( t- z_i\right) \left( \nabla t - \bm x_i\right)+\lambda\bm w$.
\end{algorithmic}
\caption{Efficient computation of \eqref{eq:derivatives} for \toppushk.}
\label{alg:derivatives1}
\end{algorithm}

For the \patmat method, the computation is slightly more difficult. The threshold $t$ is defined through equation
$$
\frac1n\sum_{\bm x\in\Xcal}l(\beta(\bm w^\top \bm x-t(\bm w)))=\tau.
$$
Differentiating this equation with respect to $\bm w$ results in
$$
\sum_{\bm x\in\Xcal}l'(\beta(\bm w^\top \bm x-t(\bm w)))(\bm x-\nabla t(\bm w))=0,
$$
and thus
$$
\nabla t(\bm w) = \frac{\sum_{\bm x\in\Xcal}l'(\beta(\bm w^\top \bm x-t(\bm w)))\bm x}{\sum_{\bm x\in\Xcal}l'(\beta(\bm w^\top \bm x-t(\bm w)))}.
$$
Similarly to the previous case, we present the efficient computation of \eqref{eq:derivatives} in Algorithm \ref{alg:derivatives2}.

\begin{algorithm}[!ht]
\begin{algorithmic}[1]
\State Compute the scores $z_i = \bm w^\top \bm x_i$ for all indices $i\in I^+\cup I^-$
\State Solve the equation $\sum_{i\in I^+\cup I^-}l(\beta(z_i-t))=n\tau$ for the threshold $t$
\State Compute the threshold derivative $\nabla t = \frac{\sum_{i\in I^+\cup I^-}l'(\beta(z_i-t))\bm x_i}{\sum_{i\in I^+\cup I^-}l'(\beta(z_i-t))}$
%\State Compute the objective as $\frac{1}{n_+}\sum_{i\in I_+}l\left( t - z_i\right)+ \frac\lambda2\norm{\bm w}^2$
\State Compute the derivative as $\frac{1}{n_+}\sum_{i\in I_+}l'\left( t- z_i\right) \left( \nabla t - \bm x_i\right)+\lambda \bm w$.
\end{algorithmic}
\caption{Efficient computation of \eqref{eq:derivatives} for \patmat.}
\label{alg:derivatives2}
\end{algorithm}

\section{Description of CTA and NetFlow datasets}\label{sec:cisco}

Suspicious network traffic is identified by a set of anomaly detection algorithms (detectors). Since these detectors differ from each, a common idea of ensemble learning is to (linearly) combine their results to obtain a better classifier. However, this is a non-trivial task as reported by \cite{Grill_2016}. The presented methods are suitable for this task as only a small fraction of the traffic is passed to a human for manual evaluation. 

We considered two real-world sources. The first one, further called \emph{CTA}, was created by the Cisco's Cognitive Threat Analytics engine~\cite{cta} which analyzes HTTP proxy logs (typically produced
by proxy servers located on a network perimeter). The second one, further called \emph{NetFlow}, was collected by the NetFlow anomaly detection engine~\cite{rehak2009adaptive,garcia2014empirical} which processes NetFlow~\cite{netflow} records exported by routers and other network traffic shaping devices. For both sources, the overall goal is to detect infected computers on the internal network and attacks on them. A more precise description of the used datasets and the data generating and labelling process is in~\cite{Grill_2016}.

All samples were manually labelled by experienced Cisco analysts.
Therefore it is safe to assume that samples labelled as malicious are
indeed malicious. However, it may happen that some samples labelled
as legitimate can be actually malicious, which typically happens 
due to the new type of attack (malware) with a different behaviour 
from what has been seen in the past. To study classifier robustness to this type of errors, we follow the experimental protocol of~\cite{Grill_2016} and besides the original dataset, we inject artificial noise \emph{MLT} which corresponds to the situation where security analysts have failed to identify 50\% of attack types and partially mislabelled the other 50\% attack types. Thus, some positive samples are either missing or have a wrong (negative) label in the training and validation set. However, they are present in the testing set.

\bibliographystyle{abbrv}
\bibliography{References}

\end{document}